\documentclass{article}
\usepackage{aaai19}
\usepackage{lipsum}
\usepackage[utf8]{inputenc} 
\usepackage{url}            
\usepackage{booktabs}       
\usepackage{amsfonts}       
\usepackage{nicefrac}       
\usepackage{microtype}      

\usepackage{latexsym, mathrsfs}
\usepackage{graphicx}
\usepackage[usenames,dvipsnames,svgnames,table]{xcolor}    
\usepackage[linesnumbered,ruled,procnumbered]{algorithm2e}
\usepackage{times}
\usepackage{paralist}
\usepackage{multirow}
\usepackage{xspace}
\usepackage{subcaption} 
\usepackage{hhline}
\usepackage{times}
\usepackage[noend]{algpseudocode}
\usepackage{floatrow}
\usepackage{appendix}
\usepackage{amsmath}
\usepackage{amsthm}
\usepackage{amssymb}

\newcommand{\stitle}[1]{\noindent\textbf{#1}}


\newtheorem{assumption}{\textbf{Assumption}}
\newtheorem{theorem}{\textbf{Theorem}}
\newtheorem{corollary}{{Corollary}}

\newtheorem{problem}{\textbf{Problem}}

\newcommand{\dd}{\mathcal{D}\xspace}
\newcommand{\hh}{\mathcal{H}\xspace}
\newcommand{\ff}{\mathcal{F}\xspace}
\newcommand{\cc}{\mathcal{C}\xspace}
\newcommand{\calT}{\mathcal{T}\xspace}
\newcommand{\calA}{\mathcal{A}\xspace}

\newcommand{\eat}[1]{}
\newcommand{\techreport}[1]{}

\newcommand{\aml}{{\sc Abc}\xspace}

\newcommand{\gucb}{{\sc Geo-Ucb}\xspace}
\newcommand{\grad}{{\sc GradientCI}\xspace}

\newcommand{\ucb}{{\sc Ucb}\xspace}
\newcommand{\plateau}{{\sc CIEstimator}\xspace}
\newcommand{\scheduler}{{\sc Scheduler}\xspace}


\newcommand{\techreporttext}[1]{}

\newcommand{\new}{}
\newcommand{\del}[1]{}

\newenvironment{denselist}{
    \begin{list}{\small{$\bullet$}}%
    {
    }
    }%
    {\end{list}}

\pdfinfo{
/Title (ABC: Efficient Selection of Machine Learning Configuration on Large Dataset)
/Author (Silu Huang, Chi Wang, Bolin Ding, Surajit Chaudhuri) 
}

\begin{document}
\title{\aml: Efficient Selection of Machine Learning Configuration on Large Dataset}

\author{
Silu Huang$^1$\thanks{Work done while visiting Microsoft Research} ~~ Chi Wang$^2$ ~~~ Bolin Ding$^3$\thanks{Work done while working in Microsoft Research}  ~~~~ Surajit Chaudhuri$^2$ \\
$^1$University of Illinois, Urbana-Champaign, IL \\
~~~~ $^2$Microsoft Research, Redmond, WA \\
~~~~~~ $^3$Alibaba Group, Bellevue, WA \\
shuang86@illinois.edu,~~ \{wang.chi, surajitc\}@microsoft.com, ~~bolin.ding@alibaba-inc.com \\
}

\maketitle

\begin{abstract}
A machine learning configuration refers to a combination of preprocessor, learner, and hyperparameters. Given a set of configurations and a large dataset randomly split into training and testing set, we study how to efficiently select the best configuration with approximately the highest testing accuracy when trained from the training set. To guarantee small accuracy loss, we develop a solution using confidence interval (CI)-based progressive sampling and pruning strategy. Compared to using full data to find the exact best configuration, our solution achieves more than two orders of magnitude speedup, while the returned top configuration has identical or close test accuracy. 
\end{abstract}



\section{Introduction}\label{sec:intro}

Increasing the productivity of data scientists has been a target for many machine learning service providers, such as Azure ML, DataRobot, Google Cloud ML, and AWS ML.
For a new predictive task, a data scientist usually spends a vast amount of time to train a good ML solution. A proper {\em configuration}, i.e., the combination of preprocessor (e.g., feature engineering), learner (i.e., training algorithm) and the associated hyperparameters, is critical to achieving good performance. 
It often takes tens or hundreds of trials to select a suitable configuration. 

There are AutoML tools like auto-sklearn~\cite{NIPS:eurer2015} to automate these trials, and output a configuration with highest evaluated performance. 
However, both the manual and AutoML approaches have become increasingly inefficient as the available ML data volume grows to millions or more. Even the trial for a single configuration can take hours or days for such large-scale datasets. Motivated by this efficiency issue, we propose a module called {\em approximate best configuration} (\aml). Given a set of configurations, it outputs the approximate best configuration, such that the accuracy loss to the best configuration is below a threshold. Our goal is to {\em efficiently} select the approximate best configuration. 

The intuition behind \aml is that the ML model trained over a sampled dataset can be used to approximate the model trained over the full dataset. However, 
the optimal sample size to determine the best configuration up to an accuracy loss threshold is unknown. We develop a novel confidence interval (CI)-based progressive sampling and pruning solution, by addressing two questions: 
\begin{inparaenum}[\itshape (a)\upshape]
\item {\em CI estimator:} given a sampled training dataset, how to estimate the confidence interval of a configuration's real performance with full training data? 
\item {\em scheduler:} as the optimal sample size is unknown \emph{a priori}, how to allocate  appropriate sample size for each configuration? 
\end{inparaenum}

Our contributions are summarized as the following.
\begin{denselist}
  \item We develop an \aml framework using progressive sampling and CI-based pruning. It ensures finding an approximate best configuration while reducing the running time. 
  \item We present and prove bounds for the real test accuracy when the ML model is trained using full data, based on the model trained with sampled data. 
  \item Within \aml, we design an approximately optimal scheduling scheme based on the confidence interval, 
for allocating sample size among different configurations. 
  \item We conduct experiments with large datasets. We demonstrate that our \aml solution is tens to hundreds of times faster, while returning top configurations with no more than 1\% accuracy loss. 
\end{denselist}

\section{Problem Formulation} \label{sec:prob}
\stitle{Notions and Notations.} 
In this paper, we focus on classification tasks with a large set of labeled data $\dd$. 
In order for reliable evaluation of a trained classifier, data scientists usually split the available data {\em randomly} into training and testing set $\dd_{tr}$ and $\dd_{te}$.
After that, they specify a number of configurations of the ML workflow and try to select the best configuration. Let $\cc$ be the candidate configuration set and $C_i$ be the $i^{th}$ configuration in $\cc$. We further let $n$ be the number of configurations, i.e., $n=|\cc|$.
Using terminology from learning theory, each configuration $C_i$ defines a {\em hypothesis space} $\hh_i$, where each {\em hypothesis} $H\in \hh_i$ is a possible classifier trained under this configuration. 
Given a training dataset $\dd_{tr}$, the learner in $C_i$ will output a hypothesis $H_{tr}^i\in \hh_i$ as the trained classifier. The quality of the classifier is measured against the heldout testing data $\dd_{te}$.
In this paper, we focus on {\em accuracy} as the quality metric. We denote the accuracy of hypothesis $H\in \hh_i$ on dataset $D$ as $\calA(H,D)$. In particular, given a configuration $C_i$, we define its \emph{real test accuracy} as $\calA_i = \calA(H_{tr}^i, \dd_{te})$.

\begin{table*}[t]
    \caption{Notations}
    \begin{center}
    \begin{tabular}{ |c|c||c|c| } 
     \hline
    $\cc$ & configuration set & $C_i$ & one configuration in $\cc$ \\
     \hline
     $\hh_i$ & hypothesis class of $C_i$ & $H_{tr}^i$ & hypothesis trained with $\dd_{tr}$ under $C_i$ \\
    \hline
    $\calA_i$ & real test accuracy for $C_i\in \cc$ & $\dd_{te}$ & test dataset\\
    \hline
     $\dd$ & full dataset & $\dd_{tr}$ & training dataset \\ 
      \hline
     $S_{tr}$ & sample set from $\dd_{tr}$ & $S_{te}$ & sample set from $\dd_{te}$ \\
     \hline
     $t_i$ & total running time for $C_i$ & $l_i$ & lower bound of $\calA_i$ \\
     \hline
     $u_i$ & upper bound of $\calA_i$ & $n$ & $n = |\cc| $\\ 
     \hline
    \end{tabular}
    \end{center}
    \end{table*}
    
\stitle{Problem Definition.} 
A standard practice to select the best configuration from a configuration set $\cc$ is to train with each configuration using full training data, and then pick the one with the highest test accuracy, i.e., ${i^*} = \arg\max_{i\in [n]} \calA_i$. Note that an implicit assumption made here is that the returned classifier with full training data has equal or higher test accuracy than the classifier trained with sampled training data. We {\new call this \emph{exploitativeness} assumption} and follow it in this paper.
From a user's perspective, if there are multiple configurations with nearly identical highest real test accuracy, then it would suffice to return any of them as the best configuration. So we introduce a new problem {\em approximate best configuration} selection, as formalized in Problem~\ref{prob:bestIdentify}.

\begin{problem}[Approximate Best Configuration Selection]\label{prob:bestIdentify}
Given a configuration candidate set $\cc$ and an accuracy loss tolerance $\epsilon$, select a configuration $C_{i'}$ whose real test accuracy is within $\epsilon$ away from that of the best configuration $C_{i^*}$, i.e., $\calA_{i^*} - \calA_{i'} \leq \epsilon$, and minimize the total running time.
\end{problem}

\section{CI-based Framework}\label{sec:framework}
Before introducing our framework, we first describe some insights based on simple observations.
We experiment on the {\em FlightDelay} dataset with five learners (as five configurations).
Readers can refer to Table~\ref{table:dataset} for detailed statistics of this dataset. The {\em learning curve} for each configuration is depicted in Figure~\ref{fig:obs}, where x-axis is the training sample size in log-scale and y-axis is the test accuracy on $\dd_{te}$.
In general, the test accuracy approaches the real test accuracy with the increase of the training sample size. When the sample size is large enough ($\ge$2M), the configuration with the highest test accuracy is LightGBM -- the true best configuration. If we knew it before the experiment, we could use a fraction of the training data to select the right configuration.

\begin{table*}[t]
  \begin{center}
  \small
    \begin{tabular}{rrrl}
    \hline
     Name & $|\dd|$ & $|\ff|$ & Origin\\ \hline
     Twitter & 1.4M & 9866   & http://www.sentiment140.com\\ 
     FlightDelay & 7.3M & 630  & https://catalog.data.gov/dataset/airline-on-time-performance-and-causes-of-flight-delays-on-time-data \\ 
     NYCTaxi & 10M & 21 & http://www.nyc.gov/html/tlc/html/about/trip\_record\_data.shtml \\ 
     HEPMASS & 10M & 28 & https://archive.ics.uci.edu/ml/datasets/HEPMASS \\ 
     HIGGS & 10.6M & 28 & https://archive.ics.uci.edu/ml/datasets/Higgs \\ \hline
    \end{tabular}  
     {\caption{Dataset Description. $|\dd|$ and $|\ff|$ are the number of records and features respectively\label{table:dataset}}} 
  \end{center}
\end{table*}

Furthermore, the optimal sample size to minimize the running time could vary for different configurations. If we magically knew that we should use 2M training samples for LightGBM and 16K training samples for all the other configurations, we could save more time and still identify the correct best configuration. 
Unfortunately, the optimal sample size for each configuration is unknown. A natural idea is to increase the sample size gradually, until a plateau is reached in the learning curve.
However, a naive plateau estimator based on the learning curve is error-prone. As shown from Figure~\ref{fig:obs}, LightGBM's learning curve is flat from 32K to 128K. If we stopped increasing the sample size for it, it would be mis-pruned. Therefore, a more robust strategy is needed.

\begin{figure}[t]
\centering
\includegraphics[scale=0.285]{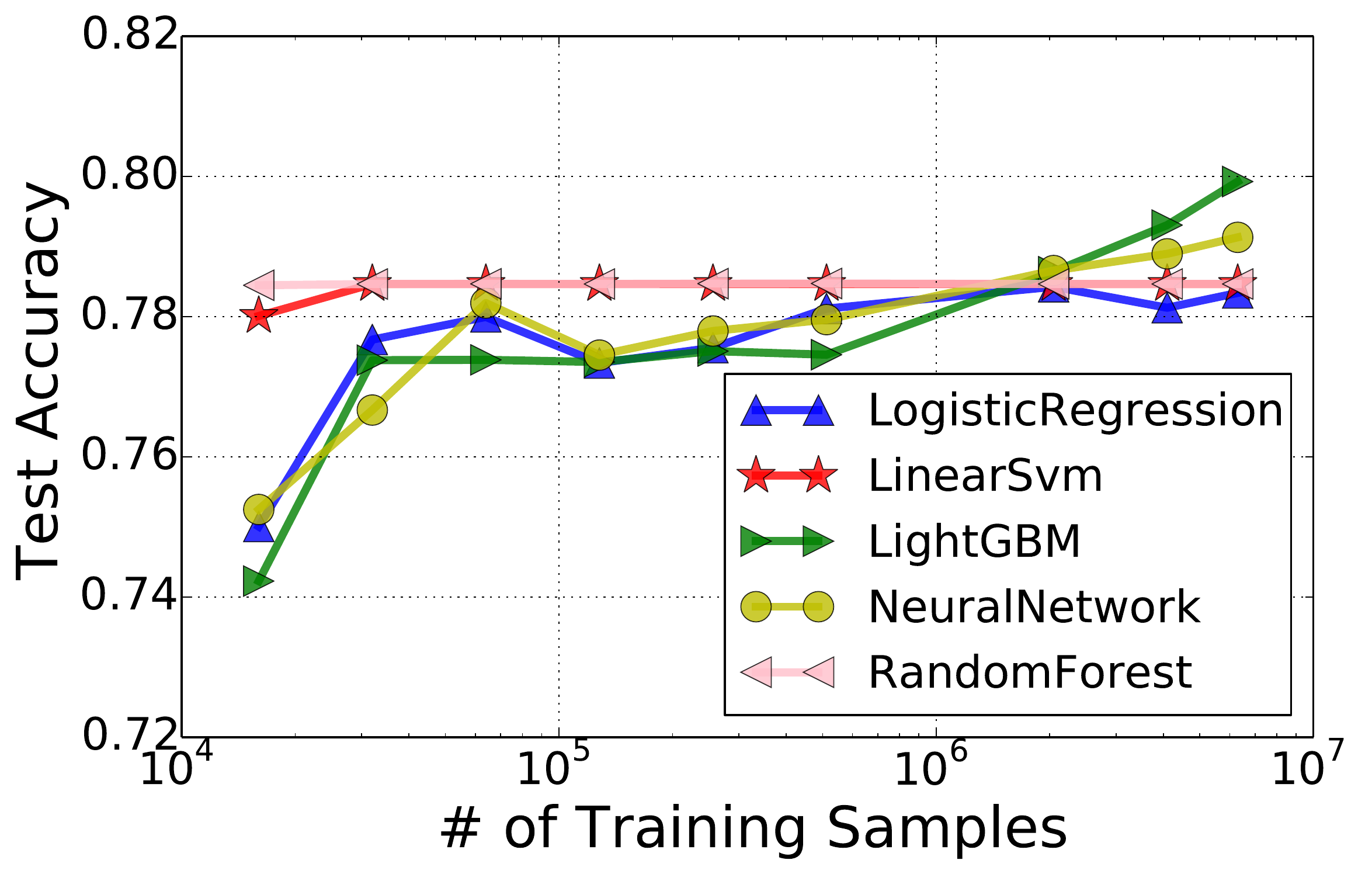}
\caption{Learning Curve\label{fig:obs}}
\end{figure}

\stitle{Overview.} The main idea is to estimate the {\em confidence interval} (CI) of each configuration's real test accuracy
with sampled data, instead of simply using a point estimation of the real test accuracy. In each round, we train the classifier for a selected configuration on some sampled training data. We call such a round of training a {\em probe}. After a probe, we update the confidence interval for the configuration. 
As the sample size increases, the confidence interval shrinks, and the badly-performing configurations can be pruned based on the CIs. 
The pruning based on CI is more robust than based on random observations from the learning curve.

\begin{algorithm}[h]
  \SetAlgoLined\SetAlgoNoEnd
  {\bf Input:} configuration set $\cc$, accuracy loss threshold $\epsilon$\;
  {\bf Output:} the approximate best configuration\;  
   Initialization: $C_{prob} \leftarrow C_1$, $C_{i'}\leftarrow C_1$, $\Omega \leftarrow \cc$\;
   \While{$|\Omega|>1$}{ 
    {\sc Probe}($C_{prob}$)\;    
    $[C_{prob}.l,C_{prob}.u]\gets$\plateau($C_{prob}$) \;
    \lIf{$C_{prob}.l > C_{i'}.l$}{
    $C_{i'} \leftarrow C_{prob}$}   
    \For(\tcp*[f]{pruning}){$C \in \Omega$}{
      \lIf{$C.u - C_{i'}.l\leq \epsilon$}{
      $\Omega\gets \Omega - C$} 
    }     
    \If{Pruning happens}{
    	\For{$C \in \Omega$} {
    		$C.u_{old} \leftarrow C.u$;
    		$C.l_{old} \leftarrow C.l$;
    	}
    }
    $C_{prob} \leftarrow$ \scheduler($\Omega$)     
    }
    \Return $C_{i'}$\;
   \caption{\aml}
   \label{alg:CI-based}
  \end{algorithm}

\stitle{Detailed Algorithm.} \aml proceeds round by round as shown in Algorithm~\ref{alg:CI-based}, where each configuration $C_i$ is annotated with its current sample size ($C_i.s$), {\new current lower bound ($C_i.l$), current upper bound ($C_i.u$), and the cached lower bound ($C_i.l_{old}$) and upper bound ($C_i.u_{old}$) in the recent pruning round.}
In each round within the while loop (line 4), it first probes the configuration $C_{prob}$ (line 5). Then it calls a \plateau subroutine to quickly estimate the confidence interval for $\calA_{prob}$ (line 6). Next, it prunes badly-performing configurations (line 7-9). 
Line 7 identifies the configuration $C_{i'}$ with the largest lower bound. Line 8-9 prunes an configuration if its upper bound is within $\epsilon$ away from the largest lower bound.
{\new If any configuration is pruned (line 10), we call this iteration a {\em snapshot} and will update $C.l_{old}$ and $C.u_{old}$ for each configuration $C$ in this snapshot (line 11-12).} 
At last, it calls a \scheduler subroutine to determine which configuration to probe next as well as its sample size (line 10).

We describe \plateau and \scheduler in the next two sections. 

\section{CI Estimator} \label{sec:plateau}
In this section, we will derive a \plateau for each configuration's real test accuracy, based on the probe over sampled data. For configuration $C_i$, the confidence interval $[l_i, u_i]$ needs to contain the real test accuracy $\calA_i$ with high probability. The computation of $l_i$ and $u_i$ needs to be efficient, i.e., no slower than the probe. In the following, we assume $i$ is fixed and omit it in the notations.

At the first glance, the CI estimation may remind readers of the generalization error bounds (e.g., VC-bound).
The generalization error bound is a universal bound of the difference between each hypothesis's accuracy in training data and its accuracy in infinite data following the same distribution. Nevertheless, the confidence interval we need is the range of the real test accuracy of the hypothesis $H_{tr}$ trained from full training data, while we only have the hypothesis $H_{S_{tr}}$ trained from a sample $S_{tr}\subset D_{tr}$. Therefore, we cannot apply generalization error bound to obtain our confidence interval. 

We use Figure~\ref{fig:plateauEst} to summarize the notations and their relationships which are important for understanding the theoretical results.
$H_{tr}$, $H_{S_{tr}}$, and $H^*$ correspond to the returned hypothesis after training a fixed configuration with full training dataset $\dd_{tr}$, the sampled training dataset $S_{tr}$, and the full dataset $\dd$ respectively.  
For instance, Figure~\ref{fig:plateauEst}(a) shows \del{Next, let us review }the overall derivation relationships among  $\dd$, $\dd_{tr}$, $\dd_{te}$, $S_{tr}$, $S_{te}$, and $H_{S_{tr}}$. First, the full training data $\dd_{tr}$ and the full testing data $\dd_{te}$ are randomly split from the whole data $\dd$. Second, the sampled training data $S_{tr}$ and the sampled testing data $S_{te}$ are randomly drawn from the full training data $\dd_{tr}$ and full testing data $\dd_{te}$. Last, $H_{S_{tr}}$ is trained from the sampled training data $S_{tr}$. 
Note that the CI estimator only has access to $H_{S_{tr}}, S_{tr}$ and $S_{te}$. Though $H_{tr}$ and $H^*$ are not accessible, they are useful in our analysis.

\stitle{Upper bound.} The intuition behind the confidence interval estimation is that we need to relate the two hypotheses $H_{S_{tr}}$ and $H_{tr}$, and use the information we have on $H_{S_{tr}}$ to infer the performance of $H_{tr}$. To upper bound the accuracy of $H_{tr}$, we leverage a \emph{fitness} condition: The training process produces a hypothesis that fits the training data. When the configuration is fixed, the accuracy in a dataset $D$ of the hypothesis trained on $D$ should be no lower than the hypothesis trained on a different dataset $D'\neq D$. 
It is the only assumption we need to prove the upper bound, no matter what training algorithm is used. Under this condition, we found an inequality chain to connect the training accuracy $\calA(H_{S_{tr}},S_{tr})$ to the real test accuracy of $H_{tr}$.

\begin{figure}[t!]
\centering
\includegraphics[scale=0.75]{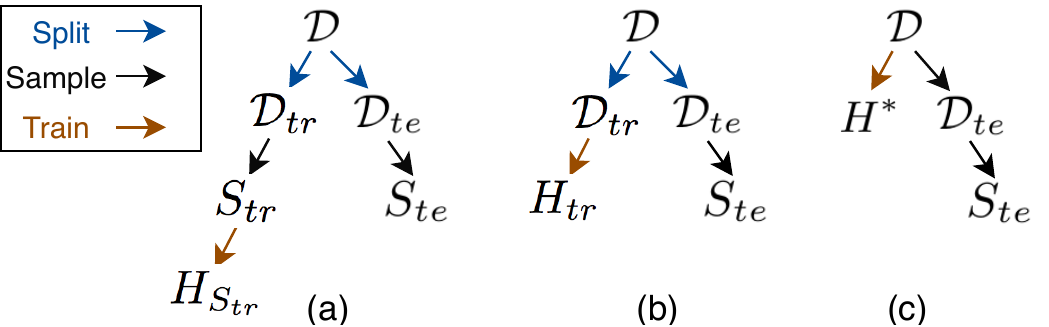}
\caption{Notations Used in CI Estimation and Analysis } 
\label{fig:plateauEst}
\end{figure}

\begin{theorem}[Upper Bound]\label{thn:upper}
    Under the fitness condition, with probability at least $1-\frac{\delta}{2n^2}$, $\calA(H_{tr},\dd_{te}) \leq u$, where
    \begin{align*}
    u&\triangleq\calA(H_{S_{tr}},S_{tr}) + (\frac{1}{2|S_{tr}|}\ln \frac{4n^2}{{\delta}})^{\frac{1}{2}} + (\frac{1}{2|\dd_{te}|}\ln \frac{4n^2}{{\delta}})^{\frac{1}{2}}
    \end{align*}    
\end{theorem}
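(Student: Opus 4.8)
The plan is to build a chain of inequalities that climbs from the quantity we want to bound, $\calA(H_{tr},\dd_{te})$, up to the observable quantity $\calA(H_{S_{tr}},S_{tr})$, using the hypothesis $H^*$ (trained on the whole dataset $\dd$) as an intermediary even though it is never computed. The chain alternates two kinds of links: \emph{deterministic} links that invoke the fitness condition, and \emph{probabilistic} links that invoke Hoeffding's inequality. There will be exactly two Hoeffding links, one operating at scale $|\dd_{te}|$ and one at scale $|S_{tr}|$; these produce the two square‑root terms in $u$, each at the cost of a failure probability $\delta/(4n^2)$, so a union bound leaves the stated $1-\delta/(2n^2)$.

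\emph{Step 1 (deterministic, the crux): $\calA(H_{tr},\dd_{te}) \le \calA(H^*,\dd_{te})$.} Fitness alone does not give this, since it compares a hypothesis with others only on the very set the hypothesis was trained on; but fitness does give the two ``oppositely oriented'' facts $\calA(H_{tr},\dd) \le \calA(H^*,\dd)$ (since $H^*$ is trained on $\dd$) and $\calA(H^*,\dd_{tr}) \le \calA(H_{tr},\dd_{tr})$ (since $H_{tr}$ is trained on $\dd_{tr}$). Combine these with the elementary averaging identity $|\dd|\,\calA(H,\dd) = |\dd_{tr}|\,\calA(H,\dd_{tr}) + |\dd_{te}|\,\calA(H,\dd_{te})$, valid for any fixed $H$ because $\dd = \dd_{tr} \sqcup \dd_{te}$: write $|\dd_{te}|\,\calA(H_{tr},\dd_{te}) = |\dd|\,\calA(H_{tr},\dd) - |\dd_{tr}|\,\calA(H_{tr},\dd_{tr}) \le |\dd|\,\calA(H^*,\dd) - |\dd_{tr}|\,\calA(H^*,\dd_{tr}) = |\dd_{te}|\,\calA(H^*,\dd_{te})$. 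Symmetrically, fitness applied to the pair $(S_{tr},\dd)$ gives the other deterministic link, at the top of the chain: $\calA(H^*,S_{tr}) \le \calA(H_{S_{tr}},S_{tr})$.

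\emph{Steps 2--3 (concentration).} Condition on the data $\dd$, so that $H^*$ is a fixed hypothesis. The random train/test split makes $\dd_{te}$ a uniformly random subset of $\dd$ of size $|\dd_{te}|$, and $S_{tr}$, being a random subsample of $\dd_{tr}$ which is itself a random subsample of $\dd$, is marginally a uniformly random subset of $\dd$ of size $|S_{tr}|$; both are independent of $H^*$ given $\dd$. Hence $\calA(H^*,\dd_{te})$ and $\calA(H^*,S_{tr})$ are sample means of draws without replacement from the population of per‑example correctness indicators of $H^*$ on $\dd$, whose population mean is exactly $\calA(H^*,\dd)$. Hoeffding's inequality (valid for sampling without replacement) then yields $\calA(H^*,\dd_{te}) \le \calA(H^*,\dd) + \bigl(\tfrac{1}{2|\dd_{te}|}\ln\tfrac{4n^2}{\delta}\bigr)^{1/2}$ and $\calA(H^*,\dd) \le \calA(H^*,S_{tr}) + \bigl(\tfrac{1}{2|S_{tr}|}\ln\tfrac{4n^2}{\delta}\bigr)^{1/2}$, each on an event of probability at least $1-\delta/(4n^2)$. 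Intersecting these two events (probability at least $1-\delta/(2n^2)$) and composing all four links gives $\calA(H_{tr},\dd_{te}) \le \calA(H^*,\dd_{te}) \le \calA(H^*,\dd) + c_1 \le \calA(H^*,S_{tr}) + c_1 + c_2 \le \calA(H_{S_{tr}},S_{tr}) + c_1 + c_2 = u$, where $c_1, c_2$ denote the two square‑root terms.

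The main obstacle is Step~1: turning the fitness condition --- a statement purely about training accuracy --- into a comparison on the held‑out set $\dd_{te}$. The resolution, namely using the partition identity $\dd = \dd_{tr}\sqcup\dd_{te}$ to trade an accuracy‑on‑$\dd_{te}$ statement for an accuracy‑on‑$\dd$ statement together with an accuracy‑on‑$\dd_{tr}$ statement, then hitting each with fitness in the appropriate direction, is the one genuinely non‑routine idea. A secondary point needing care is the probabilistic setup in Steps~2--3: one must check that, after conditioning on $\dd$, both $\dd_{te}$ and $S_{tr}$ are uniform subsets of $\dd$ independent of $H^*$, so that the two Hoeffding bounds have population mean exactly $\calA(H^*,\dd)$; the rest is bookkeeping of constants ($\delta' = \delta/(4n^2)$ per event, two events, union bound).
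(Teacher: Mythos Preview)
Your proposal is correct and matches the paper's proof essentially step for step: the same four-link chain through $H^*$, the same use of the partition identity $\dd = \dd_{tr}\sqcup\dd_{te}$ together with two applications of fitness to obtain $\calA(H_{tr},\dd_{te}) \le \calA(H^*,\dd_{te})$, the same two Hoeffding bounds at scales $|\dd_{te}|$ and $|S_{tr}|$ with failure probability $\delta/(4n^2)$ each, and the final fitness link on $S_{tr}$. Your discussion of why $H^*$ (rather than $H_{S_{tr}}$) must be the pivot for the Hoeffding steps is exactly the point the paper singles out as well.
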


{
\begin{proof}
Let us first recall the fitness condition.  Given a fixed configuration $C_i$, let $H$ and $H'$ be the hypothesis returned by training on two different sample sets $D$ and $D'$, respectively. Note that $H$ and $H'$ are both from the same fixed hypothesis space $\hh_i$. Our assumption is that {\em $H$ has no lower accuracy on $D$ than $H'$}. Symmetrically, {\em $H'$ has no lower accuracy on $D'$ than $H$}. 

First, let us break down {$\calA(H_{tr},\dd_{te}) - \calA(H_{S_{tr}},S_{tr})$} into four clauses, as shown in Equation~\eqref{eq:est}. 

\begin{equation} \label{eq:est}
\begin{split}
&\calA(H_{tr},\dd_{te}) - \calA(H_{S_{tr}},S_{tr})\\ 
 =  & [\calA(H_{tr},\dd_{te}) - \calA(H^*,\dd_{te})] + [\calA(H^*,\dd_{te}) - \calA(H^{*},\dd)]\\
 + & [\calA(H^{*},\dd) - \calA(H^{*},S_{tr})] + [\calA(H^{*},S_{tr}) - \calA(H_{S_{tr}},S_{tr})]
\end{split}
\end{equation}

Since $\dd_{tr}$ and $\dd_{te}$ are randomly split from $\dd$, we have $\dd = \dd_{tr} \cup \dd_{te}$. Let $x = \frac{|\dd_{te}|}{|\dd|}$ be the hold-out ratio. For any $H\in \hh$, we have:

\begin{equation} \label{eq:accumulate}
\begin{split}
x\calA(H,\dd_{te}) + (1-x)\calA(H,\dd_{tr}) = \calA(H,\dd) \\
\Rightarrow \calA(H,\dd_{te}) = \frac{1}{x}[\calA(H,\dd) - (1-x)\calA(H,\dd_{tr})]
\end{split}
\end{equation}

Next, apply Equation~\eqref{eq:accumulate} to the first clause in Equation~\eqref{eq:est}:

\begin{equation} \label{eq:clause1}
\begin{split}
& \calA(H_{tr},\dd_{te}) - \calA(H^*,\dd_{te}) \\
= & \frac{1}{x}[\calA(H_{tr}, \dd) - \calA(H^*, \dd) ]\\
- & \frac{1-x}{x}[\calA(H_{tr}, \dd_{tr})- \calA(H^{*}, \dd_{tr})] \leq 0  
\end{split}
\end{equation}
The inequality is derived from the fitness assumption (recall from Figure~\ref{fig:plateauEst} that $H_{tr}$ is trained from $\dd_{tr}$ and $H^*$ is trained from $\dd$).

Next, we bound the second clause in Equation~\eqref{eq:est} with Hoeffding inequality: With probability at least $(1-\frac{\delta}{4n^2})$,
\begin{equation} \label{eq:clause2}
\begin{split}
& \calA(H^*,\dd_{te}) - \calA(H^{*},\dd) \leq (\frac{1}{2|\dd_{te}|}\ln \frac{4n^2}{{\delta}})^{\frac{1}{2}} \\
\end{split}
\end{equation}

Similarly, with probability at least  $(1-\frac{\delta}{4n^2})$,
\begin{equation} \label{eq:clause3}
\begin{split}
& \calA(H^*,\dd) - \calA(H^{*},S_{tr}) \leq (\frac{1}{2|S_{tr}|}\ln \frac{4n^2}{{\delta}})^{\frac{1}{2}} \\
\end{split}
\end{equation}

Please note that Equation~\eqref{eq:clause2} and \eqref{eq:clause3} will not hold if we replace $H^*$ with $H_{S_{tr}}$. 
That is because for hypothesis $H_{S_{tr}}$, $\dd_{te}$ and $S_{tr}$ cannot be regarded as random samples, since $H_{S_{tr}}$ is tailored to the sample set $S_{tr}$. Therefore, introducing $H^*$ is necessary in our analysis.

Last, since $H_{S_{tr}}$ is trained from $S_{tr}$, by the fitness assumption we have:
\begin{equation} \label{eq:clause4}
\begin{split}
\calA(H^{*},S_{tr}) - \calA(H_{S_{tr}},S_{tr}) \leq 0 
\end{split}
\end{equation}

By substituting the four clauses in Equation~\eqref{eq:est} with Equation~\eqref{eq:clause1}-\eqref{eq:clause4}, we obtain Theorem~\ref{thn:upper} using union bound.
\end{proof}
}

Note that the computation of $\calA(H_{S_{tr}},S_{tr})$ is no slower than the probing (i.e., training with sampled data). In fact, the evaluation is usually much more efficient than training for the same scale of dataset. 

\stitle{Lower Bound.} The lower bound is easier due to the {\emph{exploitativeness}} presumption discussed in the problem formulation: Full training data produce better hypothesis than sampled training data for a fixed configuration. 
The real test accuracy of $H_{tr}$ can then be lower bounded by $\calA(H_{S_{tr}},D_{te})$.
However, the computation of $\calA(H_{S_{tr}},D_{te})$ can be slower than probing, if $|D_{te}|\gg|S_{tr}|$. To make the CI estimation efficient, we also sample the testing data. We denote the sampled testing data as $S_{te}$. We can then lower bound $\calA(H_{S_{tr}},D_{te})$ by $\calA(H_{S_{tr}},S_{te})$ minus a variation term. 
\begin{theorem}[Lower Bound]\label{thn:lower} 
	Under the exploitativeness assumption, with probability at least $1- \frac{\delta}{2n^2}$, 
 \[\calA(H_{{tr}},\dd_{te}) \geq l\triangleq \calA(H_{S_{tr}},S_{te}) - (\frac{1}{2|S_{te}|}\ln {\frac{2n^2}{\delta}})^{\frac{1}{2}}\] 
\end{theorem}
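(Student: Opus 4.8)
The plan is to obtain the lower bound as a chain of two inequalities: a deterministic one coming from the exploitativeness assumption, followed by a single Hoeffding-style concentration step.

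First I would apply exploitativeness verbatim. Since $H_{tr}$ is the hypothesis trained on the whole training set $\dd_{tr}$ while $H_{S_{tr}}$ is trained on the subsample $S_{tr}\subseteq\dd_{tr}$, the assumption that full training data yield a hypothesis no worse than sampled training data (when evaluated on the held-out set) gives $\calA(H_{tr},\dd_{te})\ge \calA(H_{S_{tr}},\dd_{te})$ with certainty. It then remains to lower bound the unobservable quantity $\calA(H_{S_{tr}},\dd_{te})$ by the empirical quantity $\calA(H_{S_{tr}},S_{te})$ that the CI estimator actually computes.

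Second I would argue that $\calA(H_{S_{tr}},S_{te})$ concentrates around $\calA(H_{S_{tr}},\dd_{te})$. The crucial point — and the step that needs the most care — is that $H_{S_{tr}}$ may be treated as a \emph{fixed} hypothesis relative to the randomness in $S_{te}$: $H_{S_{tr}}$ is a deterministic function of $S_{tr}$, which is drawn entirely from $\dd_{tr}$, and since the split of $\dd$ into $\dd_{tr}$ and $\dd_{te}$ is disjoint, $H_{S_{tr}}$ is independent of $\dd_{te}$ and hence of the sample $S_{te}\subseteq\dd_{te}$. Conditioning on $H_{S_{tr}}$, the value $\calA(H_{S_{tr}},S_{te})$ is the mean of $|S_{te}|$ indicator variables (correct versus incorrect prediction on each sampled test point), each lying in $[0,1]$ with expectation $\calA(H_{S_{tr}},\dd_{te})$. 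A one-sided Hoeffding inequality (valid for sampling without replacement as well, since that concentrates at least as tightly) then gives, with probability at least $1-\frac{\delta}{2n^2}$,
\[
\calA(H_{S_{tr}},S_{te}) - \calA(H_{S_{tr}},\dd_{te}) \;\le\; \Bigl(\frac{1}{2|S_{te}|}\ln\frac{2n^2}{\delta}\Bigr)^{1/2},
\]
which rearranges to $\calA(H_{S_{tr}},\dd_{te}) \ge \calA(H_{S_{tr}},S_{te}) - \bigl(\frac{1}{2|S_{te}|}\ln\frac{2n^2}{\delta}\bigr)^{1/2} = l$. Unlike the upper bound, only one probabilistic event is involved here, so no internal union bound is needed and the failure probability matches $\frac{\delta}{2n^2}$ directly.

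Combining the two steps yields $\calA(H_{tr},\dd_{te}) \ge \calA(H_{S_{tr}},\dd_{te}) \ge l$ on the event of probability at least $1-\frac{\delta}{2n^2}$, which is exactly the claim. The only genuine obstacle is the independence argument in the second step: it is the mirror image of the difficulty flagged for the upper bound, where $S_{tr}$ and $\dd_{te}$ could not be regarded as fresh samples for $H_{S_{tr}}$ because $H_{S_{tr}}$ is fitted to $S_{tr}$; here that obstruction disappears precisely because $S_{te}$ is drawn from the disjoint test partition, and I would make the writeup state this explicitly rather than silently invoking Hoeffding.
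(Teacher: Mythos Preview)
Your proposal is correct and follows essentially the same two-step argument as the paper: exploitativeness gives $\calA(H_{tr},\dd_{te})\ge\calA(H_{S_{tr}},\dd_{te})$, then a single one-sided Hoeffding bound on $S_{te}$ yields the claimed deviation term. Your explicit justification that $H_{S_{tr}}$ is independent of $S_{te}$ (because $S_{tr}\subseteq\dd_{tr}$ and $S_{te}\subseteq\dd_{te}$ come from disjoint partitions) is actually more careful than the paper, which simply invokes Hoeffding without spelling out why it is valid here.
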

{
\begin{proof}
First, the exploitativeness assumption states that
\begin{equation} \label{eq:fullBest}
\begin{split}
\calA(H_{tr},\dd_{te}) \geq \calA(H_{S_{tr}},\dd_{te})
\end{split}
\end{equation}
Next, 
based on Hoeffding inequality, with probability at least $1- \frac{\delta}{2n^2}$,
\begin{equation} \label{eq:testAcc}
\begin{split}
    \calA(H_{S_{tr}},S_{te}) - \calA(H_{S_{tr}},\dd_{te}) \leq (\frac{1}{2|S_{te}|}\ln {\frac{2n^2}{\delta}})^{\frac{1}{2}}
\end{split}
\end{equation}
Combining Equation~\eqref{eq:fullBest} and \eqref{eq:testAcc}, we have
{$\calA(H_{tr},\dd_{te}) \geq l $}
with probability at least $1- \frac{\delta}{2n^2}$.
\end{proof}  
} 

{\new 
With Theorem~\ref{thn:upper} and \ref{thn:lower}, we can now estimate the current lower bound and upper bound of the probing configuration $C_{prob}$. As shown in Algorithm~\ref{alg:plateau}, we first initialize the current lower bound $C_{prob}.l$ and upper bound $C_{prob}.u$ according to Theorem~\ref{thn:upper} and \ref{thn:lower}. Furthermore, we add a constraint that the current CI must be contained in the CI of the last snapshot where pruning happens, i.e., $[C_{prob}.l, C_{prob}.u] \subset [C_{prob}.l_{old}, C_{prob}.u_{old}]$. Thus, if $C_{prob}.l < C_{prob}.l_{old}$, we replace $C_{prob}.l$ with $C_{prob}.l_{old}$ (line 4). Similar for $C_{prob}.u$ (line 5). In this way, we can guarantee that the CI for each configuration shrinks from one snapshot to another snapshot where pruning happens. Recall that $C_{prob}.l_{old}$ and $C_{prob}.u_{old}$ get updated in each snapshot.
}


\begin{algorithm}[h]
    \SetAlgoLined
    {\bf Input:} $C_{prob}, \calA_{prob}(H_{S_{tr}},S_{tr}), \calA_{prob}(H_{S_{tr}},S_{te})$ \;
    {\bf Output:} $[C_{prob}.l, C_{prob}.u]$\;
    $[C_{prob}.l, C_{prob}.u] \leftarrow$ Theorem~\ref{thn:upper} and \ref{thn:lower} \;
    \lIf{$C_{prob}.l < C_{prob}.l_{old}$} {$C_{prob}.l \leftarrow C_{prob}.l_{old}$}
    \lIf{$C_{prob}.u > C_{prob}.u_{old}$} {$C_{prob}.u \leftarrow C_{prob}.u_{old}$}
    \Return $[C_{prob}.l, C_{prob}.u]$ \;
     \caption{\plateau}
     \label{alg:plateau}
\end{algorithm}

\subsection{Correctness of Algorithm~\ref{alg:CI-based} \label{sec:correct}}
Theorem~\ref{thn:prob1} shows that Algorithm~\ref{alg:CI-based} can successfully return an approximate best configuration with high probability.
\begin{corollary}[Confidence Interval]\label{thn:CI}
With probability at least $1- \frac{\delta}{n^2}$, 
 { $\calA_i \in [l_i,u_i]$}.
\end{corollary}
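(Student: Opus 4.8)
The plan is to read Corollary~\ref{thn:CI} as a one-line union bound over the two one-sided guarantees already established in Theorem~\ref{thn:upper} and Theorem~\ref{thn:lower}, after first checking that the interval-tightening performed inside \plateau (lines 4--5 of Algorithm~\ref{alg:plateau}) cannot push $\calA_i$ outside the reported interval. Fix a configuration $C_i$ and recall $\calA_i=\calA(H_{tr}^i,\dd_{te})$. Applying Theorem~\ref{thn:upper} to $C_i$, the raw upper estimate $u$ computed in line~3 satisfies $\calA_i\le u$ with probability at least $1-\tfrac{\delta}{2n^2}$; applying Theorem~\ref{thn:lower}, the raw lower estimate $l$ satisfies $\calA_i\ge l$ with probability at least $1-\tfrac{\delta}{2n^2}$. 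Call the two failure events $B_u$ and $B_l$.

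Next I would argue that replacing $l$ by $\max(l,C_i.l_{old})$ and $u$ by $\min(u,C_i.u_{old})$ is harmless. Deterministically, the intersection of two intervals that each contain $\calA_i$ again contains $\calA_i$, so it suffices that the cached endpoints satisfy $C_i.l_{old}\le\calA_i\le C_i.u_{old}$. I would establish this as an invariant by induction over the sequence of snapshots (pruning rounds): before the first snapshot the cached interval is the trivial $[0,1]\ni\calA_i$, and at any later snapshot the cached interval is exactly the already-clamped interval $[C_i.l,C_i.u]$ held at that moment, which contains $\calA_i$ by the induction hypothesis together with the validity of the raw bounds computed at the corresponding probe. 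Hence, off the event $B_u\cup B_l$ (and off the analogous ``good'' events for the probes whose intervals were cached), the final reported interval $[l_i,u_i]$ contains $\calA_i$.

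A union bound then gives $\Pr[B_u\cup B_l]\le\tfrac{\delta}{2n^2}+\tfrac{\delta}{2n^2}=\tfrac{\delta}{n^2}$, which yields $\calA_i\in[l_i,u_i]$ with probability at least $1-\tfrac{\delta}{n^2}$. The main obstacle is the bookkeeping introduced by the clamping: the induction above invokes correctness of the raw bounds at earlier probes of $C_i$, so one must be careful that these extra events are either already charged to the same per-probe budget or absorbed into the global union bound used afterwards in Theorem~\ref{thn:prob1}. The clean way to organize this is to treat each (configuration, probe) pair as contributing failure probability at most $\tfrac{\delta}{2n^2}$ per side --- which is precisely the slack the $n^2$ in the denominators of Theorems~\ref{thn:upper} and~\ref{thn:lower} is reserving --- so that Corollary~\ref{thn:CI} holds for the interval carried at any fixed point of the run, with the remaining accounting across configurations and probes deferred to the correctness proof of Algorithm~\ref{alg:CI-based}.
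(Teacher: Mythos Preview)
Your approach is essentially what the paper intends: the corollary is stated without a separate proof and is meant to be read as the immediate union bound of Theorem~\ref{thn:upper} and Theorem~\ref{thn:lower}, each contributing failure probability $\tfrac{\delta}{2n^2}$. You go further than the paper by explicitly treating the clamping in Algorithm~\ref{alg:plateau} and the resulting dependence on earlier probes; the paper glosses over this and simply absorbs the cross-snapshot accounting into the union bound inside the proof of Theorem~\ref{thn:prob1} (via the phrase ``according to Algorithm~\ref{alg:plateau} and the union bound''), which is exactly the deferral you describe in your last paragraph.
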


\begin{theorem}[Correctness]\label{thn:prob1}
With probability at least $1-\delta$, 
Algorithm~\ref{alg:CI-based} returns the approximate best configuration $C_{i'}$ with $\calA_{i^*} - \calA_{i'} \leq \epsilon$. 
\end{theorem}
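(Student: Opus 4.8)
The plan is to establish two things and then combine them via a union bound. First, I would argue that conditioned on the event that \emph{all} confidence intervals are valid simultaneously — i.e.\ $\calA_i \in [l_i, u_i]$ for every $C_i \in \cc$ — Algorithm~\ref{alg:CI-based} never prunes the true best configuration $C_{i^*}$ erroneously, and moreover the returned $C_{i'}$ satisfies $\calA_{i^*} - \calA_{i'} \le \epsilon$. Second, I would bound the probability that this "all CIs valid" event fails, using Corollary~\ref{thn:CI} together with a union bound over the $n$ configurations.

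For the first part, I would reason as follows. By Corollary~\ref{thn:CI}, for a fixed configuration the real test accuracy lies in its current CI with probability at least $1 - \delta/n^2$ at each examination; I would need this to hold jointly (the constants $4n^2$, $2n^2$ inside the logarithms of Theorems~\ref{thn:upper}--\ref{thn:lower} are clearly chosen with this union bound in mind). Assume the good event holds. Consider the moment $C_{i^*}$ would be pruned: this requires $C_{i^*}.u - C_{i'}.l \le \epsilon$ for the current best-lower-bound configuration $C_{i'}$. Since $\calA_{i^*} \le C_{i^*}.u$ and $\calA_{i'} \ge C_{i'}.l$ (the CI containment, using the monotone-shrinking property enforced by Algorithm~\ref{alg:plateau} lines 4--5 so that the cached bounds remain valid), we get $\calA_{i^*} - \calA_{i'} \le C_{i^*}.u - C_{i'}.l \le \epsilon$. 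So even if $C_{i^*}$ itself is pruned, whatever configuration played the role of $C_{i'}$ at that round is already $\epsilon$-good; and since $C_{i'}$ is only ever replaced by a configuration with a strictly larger lower bound (line 7 of Algorithm~\ref{alg:CI-based}), the final returned configuration has lower bound at least $C_{i'}.l$, hence real accuracy at least $C_{i'}.l \ge C_{i^*}.u - \epsilon \ge \calA_{i^*} - \epsilon$. If $C_{i^*}$ is \emph{never} pruned, then the loop terminates with $|\Omega| = 1$ and $\Omega = \{C_{i^*}\}$, and one checks that the returned $C_{i'}$ (which has lower bound $\ge C_{i^*}.l$ at termination) again satisfies $\calA_{i'} \ge C_{i^*}.l \ge \calA_{i^*} - (u_{i^*} - l_{i^*})$; here I would need to additionally argue that the algorithm does not terminate until the surviving configuration's CI is narrow enough — or, more cleanly, observe that $C_{i^*}$ can only be the last survivor if every other configuration was pruned against some lower bound, which by the argument above was always $\ge \calA$ of an $\epsilon$-good configuration, and $C_{i^*}$ dominates all of them.

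For the probability accounting: each configuration's CI is obtained from one application of Theorem~\ref{thn:upper} (failure $\le \delta/2n^2$) and one of Theorem~\ref{thn:lower} (failure $\le \delta/2n^2$), so a single CI is valid with probability $\ge 1 - \delta/n^2$ (this is exactly Corollary~\ref{thn:CI}). Taking a union bound over the at most $n$ configurations gives total failure probability at most $n \cdot \delta/n^2 = \delta/n \le \delta$. On the complementary event, the first-part argument shows correctness, so Algorithm~\ref{alg:CI-based} returns an $\epsilon$-approximate best configuration with probability at least $1 - \delta$.

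The main obstacle I anticipate is the bookkeeping around \emph{when} CIs are "refreshed" versus "cached": the CI of a configuration is only updated when it is probed, but pruning decisions in a given round compare the current $C.u$ of possibly-stale configurations against the up-to-date $C_{i'}.l$. I must be careful that the cached bounds $C.u_{old}, C.l_{old}$ used between snapshots are still valid confidence bounds — which is why Algorithm~\ref{alg:plateau} intersects with the previous snapshot's interval, guaranteeing monotone shrinkage and hence that validity, once established at a snapshot, is preserved. Pinning down precisely that "$\calA_i \in [C_i.u_{old}, C_i.l_{old}]$ continues to hold between snapshots" and that the number of CI-validity events we union over is genuinely $O(n)$ rather than growing with the number of rounds is the delicate point; the rest is routine.
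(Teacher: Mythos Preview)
Your proposal has the right two-step skeleton (deterministic correctness conditional on CI validity, then a probability bound), but the probability accounting contains a genuine gap that you flag but do not close. You write that a union bound over the $n$ configurations gives failure probability at most $n\cdot\delta/n^2=\delta/n$; this treats each configuration as contributing a \emph{single} CI event. In fact every probe of $C_i$ produces a fresh interval via Theorems~\ref{thn:upper}--\ref{thn:lower}, each with its own $\delta/n^2$ failure probability, so the number of events to union over is the total number of probes, which is not a priori bounded by $n$. That your bound comes out as $\delta/n$ rather than $\delta$ is already a hint that a factor has gone missing.

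The paper resolves this by changing the unit of accounting from rounds to \emph{snapshots} --- rounds in which at least one configuration is pruned. Since each snapshot removes at least one configuration from $\Omega$, there are at most $R\le n-1$ snapshots. Crucially, the deterministic part of the argument only needs CI validity at snapshots: at snapshot $r$ any pruned $C_p$ satisfies $\calA_p\le u_p^{(r)}\le l_{i_r}^{(r)}+\epsilon$, and the clipping in Algorithm~\ref{alg:plateau} makes the running maximum lower bound $l_{i_r}^{(r)}$ nondecreasing in $r$ as a purely deterministic fact, so every pruned $C_p$ obeys $\calA_p\le l_{i_R}^{(R)}+\epsilon=l_{i'}+\epsilon\le\calA_{i'}+\epsilon$. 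The union bound then runs over $R$ snapshots times $n$ configurations, giving $R\cdot n\cdot\delta/n^2<\delta$. This snapshot-counting step is exactly the ``delicate point'' you leave open.

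On the deterministic side, your approach of tracking whether $C_{i^*}$ is pruned can be made to work, but the case split is more awkward than necessary --- witness the unresolved ``CI narrow enough'' condition in your ``$C_{i^*}$ never pruned'' branch. The paper's route (show directly that \emph{every} pruned configuration satisfies $\calA_p\le\calA_{i'}+\epsilon$, which automatically covers $C_{i^*}$) is cleaner and avoids that branch entirely; you essentially rediscover it in your last sentence of that paragraph.
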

\begin{proof}
Without loss of generality, we assume $C_1$ is the returned configuration $C_{i'}$ by Algorithm~\ref{alg:CI-based}. We denote the number of snapshots as $R$. We will prove that when all the confidence intervals at all the snapshots correctly bound the real test accuracy (denoted as event $E$), the algorithm returns correct approximate configuration.\del{In this event, we only need to consider the case when the confidence intervals do not expand in the iterations in $R$ as the sample size increases, because otherwise, we can force the confidence intervals not to expand, while still correctly bounding the real test accuracy.} We show that when $E$ happens, for each pruned configuration $2\leq i\leq n$, 
$\calA_i \leq \calA_1 + \epsilon$.  

Consider snapshot $r\in [R]$ and let $l_{i_r}^{(r)}$ be the highest lower bound in this snapshot, and $C_{p}$ be a pruned configuration in this snapshot. 
When $E$ happens, $\calA_{p} \leq u_{p}$. And $u_{p} \leq l_{i_r}^{(r)} + \epsilon$ according to line 9 in Algorithm~\ref{alg:CI-based}. So the pruned configuration must satisfy $\calA_{p} \leq l_{i_r}^{(r)} + \epsilon$. 
Furthermore, as Algorithm~\ref{alg:CI-based} proceeds to snapshot $r+1$, $l_{i_r}^{(r)} \leq l_{i_{r+1}}^{(r+1)}${\new. This is because the lower bound of configuration $C_{i_r}$ does not decrease from snapshot $r$ to snapshot $r+1$ according to Algorithm~\ref{alg:plateau}, i.e., $l_{i_r}^{(r)} \leq l_{i_r}^{(r+1)}$. 
In addition, $l_{i_r}^{(r+1)} \le l_{i_{r+1}}^{(r+1)}$ in snapshot $r+1$,} according to line 7 in Algorithm~\ref{alg:CI-based}.
Thus, $\calA_{p} \leq l_{i_R}^{(R)}+\epsilon = l_{1} + \epsilon$
by induction on the snapshot number $r$. 
\del{Recall that $C_1$ is the configuration with the highest lower bound in the last iteration and is returned by Algorithm~\ref{alg:CI-based}. }
Also, since $l_1 + \epsilon \leq \calA_1 + \epsilon$, we have $\calA_{p_r} \leq \calA_{1} + \epsilon$.

Next, {\new let $E_r$ be the event that all the confidence intervals at iteration $r$ correctly bound the real test accuracy where $r\in [R]$. We can decompose event $\bar{E}$ as $\bar{E}=\cup_{r=1}^R\bar{E_r}$, where $\bar{E}$ (resp. $\bar{E_r}$) is the opposite event of $E$ (resp. $E_r$).} According to Corollary~\ref{thn:CI}, the derived confidence interval $[l_i^{(r)}, u_i^{(r)}]$ at snapshot $r$ is correct with probability at least $1-\frac{\delta}{n^2}$ for any configuration $C_i$. 
{\new Hence, the probability of $\bar{E}_{r}$ is at most $\frac{\delta}{n}$ according to Algorithm~\ref{alg:plateau} and the union bound. Furthermore, we have $n-1$ pruned configurations across all iterations, so $R<n$. Consequently, the probability of $\bar{E}$ is below $\delta$ by applying union bound to the sub-events $\bar{E}_{r},r\in[R]$. That is, event $E$ happens with probability at least $1-\delta$.}
\del{Hence, for each round $r\in R$, the probability of having the correct confidence interval for each configuration is at least $1-\frac{\delta}{n}$ according to the union bound.
In total, we have $n-1$ pruned configurations across all iterations, so $R<n$.
Based on the union bound over $R$, we know that event $E$ happens with probability at least $1-\delta$.}
Thus, we have $\calA_{i^*} - \calA_{i'} \leq \epsilon$ with probability at least $1-\delta$.
\end{proof}

\stitle{Discussion.} 
Our upper confidence bound $u$ has an additive form with three components: the training accuracy on the sampled training dataset $S_{tr}$, a variation term due to training sample size $|S_{tr}|$, and a variation term due to full testing data size $|\dd_{te}|$. Intuitively, $u$ increases as the training accuracy $\calA(H_{S_{tr}},S_{tr})$ increases, because higher training accuracy indicates higher potential of the configuration's learning ability. But that potential decreases as the training sample size $|S_{tr}|$ increases, because the more data we have used, the less room for improvement by adding more training data. Finally, since the real test accuracy is measured in the full testing data $\dd_{te}$, the variation due to the random split needs to be added to $u$. The larger $\dd_{te}$ is, the smaller the variation is. Both the variation terms are affected by the confidence probability $1-\frac{\delta}{2n^2}$. Higher confidence probability corresponds to wider confidence interval, thus larger $u$. In sum, $u$ is positively correlated with the training accuracy and the number of configurations $n$, and negatively correlated with the training sample size and full testing data size.

Our lower confidence bound $l$ is expressed as the accuracy of $H_{S_{tr}}$ in the sampled testing dataset $S_{te}$, minus a variation term due to testing sample size $|S_{te}|$. 
As $|S_{te}|$ increases, the difference between $\calA(H_{S_{tr}},S_{te})$ and $\calA(H_{S_{tr}},\dd_{te})$ becomes smaller, and the lower bound rises. 
Higher confidence probability $1- \frac{\delta}{2n^2}$ corresponds to smaller $l$. 
In sum, $l$ is positively correlated with the testing sample size and the testing accuracy in the sample, and negatively correlated with $n$.

We have used the exploitativeness assumption in deriving the lower bound: {\small $\calA_i(H_{S_{tr}}, D_{te}) \leq \calA_i(H_{tr}, D_{te})$} for any $C_i$. We argue that even though this assumption is not exactly satisfied in practice, it holds closely enough to provide useful results. That is, in most cases this assumption holds, and when this assumption is violated, we can perform a post-processing step after the algorithm finishes. If there exists {\small $\calA_{i'}(H_{S_{tr}}, D_{te}) > \calA_{i'}(H_{tr}, D_{te})$} for the selected configuration $C_{i'}$, the user could use {\small $H_{S_{tr}}$} instead of {\small $H_{tr}$} as the final classifier. 
First, that satisfies users’ preference in finding a more accurate classifier. Second, it holds the $\epsilon$-guarantee, because the lower bound $l_{i'}$ holds for {\small $H_{S_{tr}}$} of configuration $C_{i'}$, and it is no lower than the pruning lower bounds $l_{i_r}, \forall r\in [R]$. 

The correctness of our algorithm is independent of the choice of the scheduler.

\section{Scheduler}\label{sec:scheduler}
This section focuses on the optimization part in Problem~\ref{prob:bestIdentify}, i.e., how to minimize the total running time. 
Let $T_i(s)$ be the probing time with a sampled training dataset size $s$ for configuration $C_i$, and  $t_i$ be the accumulated running time for probing configuration $C_i$ in Algorithm~\ref{alg:CI-based}. Also, let $l_i$ and $u_i$ be the lower bound and upper bound respectively for configuration $C_i$ when the algorithm terminates. 
With these notations, the design of \scheduler in \aml can be expressed as a constrained optimization problem. Without loss of generality, assume $C_1$ is returned by Algorithm~\ref{alg:CI-based}.

\begin{problem}[Scheduling]\label{prob:sch}
Design a scheduler to minimize $\calT = \sum_i {t}_i$\del{ + $T_1(|\dd_{tr}|)$}, subject to:
\begin{equation*} 
\begin{split}
& u_2 \leq l_1 + \epsilon, u_3 \leq l_1 + \epsilon, \dots, u_n \leq l_1 + \epsilon \\
\end{split}
\end{equation*}

\end{problem}

The objective function in Problem~\ref{prob:sch} is the time taken to select the approximate best configuration. Since probing dominates the running time in each iteration, we use the total time of all probes as the proxy of the selection time. The constraints in Problem~\ref{prob:sch} ensure that all the configurations except $C_1$ are pruned. They are necessary for the termination of Algorithm~\ref{alg:CI-based}. 

To solve Problem~\ref{prob:sch}, we begin with studying the properties of the `oracle' optimal scheduling scheme when it has access to $t_i$ as a function of $l_i$ and $u_i$ respectively, i.e., $t_i = f_i(l_i)$ and $t_i = g_i(u_i)$, after the samples are drawn. 
We claim that the optimal scheduling scheme with this oracle access probes each configuration uniquely once, since otherwise we can always reduce the total running time by only keeping the last probe. 
Our objective function can be rewritten as $f_1(l_1) + g_2(u_2) + \cdots + g_n(u_n)$. Furthermore, by applying the method of Lagrange multipliers, 
we obtain the conditions the optimal solution must satisfy:
\begin{equation}\label{eqn:constraints} 
\left\{
\begin{array}{lll}
\frac{df_1}{dl_1} = -(\frac{dg_2}{du_2}+\cdots+\frac{dg_n}{du_n})\\
\\
l_1 + \epsilon = u_2 = \cdots = u_n
\end{array}
\right. 
\end{equation}
Now, since we do not have oracle access to $f_i$ and $g_i$, there is no closed-form formula to decide the optimal sample size $s_i^*$ for configuration $C_i$.  
To solve this challenge, We propose a scheduling scheme \grad with two parts. 

First, we use the gradient of the running time with respect to the confidence interval to
determine the configuration to probe next. We depict this strategy in Algorithm~\ref{alg:gradient}. \grad first sorts the remaining configuration set $\Omega$ in descending order of the upper bound (line 3), and make a guess ($\Omega_1$) on the best configuration $C_1$. 
Next, it compares the gradient $\frac{\Delta T_1}{\Delta l_1}$ with $|\frac{\Delta T_2}{\Delta u_2}+\cdots+\frac{\Delta T_n}{\Delta u_n}|$: If $\frac{\Delta T_1}{\Delta l_1}$ is smaller, then configuration $\Omega_1$ with the largest upper bound is picked for the next probe (line 4); otherwise, configuration $\Omega_2$ with the second largest upper bound is picked (line 5). Here, $\Delta T_i$ denotes the running time difference between the recent two consecutive probes on $C_i$, and $\frac{\Delta T_i}{\Delta l_i}$ serves as the proxy of $\frac{\Delta f_i}{\Delta l_i}$ (similar for $\frac{\Delta g_i}{\Delta u_i}$). The choice between $\Omega_1$ and others is based on the first condition in Equation~\eqref{eqn:constraints}. If the unit cost of increasing the lower bound of $\Omega_1$ is smaller than the sum of the unit cost of decreasing the upper bounds of the other configurations, then we opt to probe $\Omega_1$. The choice of $\Omega_2$ among $\Omega_2$ to $\Omega_n$ is based on the second condition in Equation~\eqref{eqn:constraints}, towards attaining the same upper bound for them.

Second, we design the sample size sequence within each configuration. 
As shown in line 6 of Algorithm~\ref{alg:gradient}, we utilize a common trick called {\em geometric scheduling}, 
which was used in prior work to increase the sample size for a single configuration~\cite{KDD:provost1999}.     
We further derive the closed-form for the optimal step size $c$, when $T_i(s)$ is a power function over the sample size, i.e., $T_i(s) = s^\alpha$ where $\alpha$ is a real number. The optimal step size follows $c = 2^{\frac{1}{\alpha}}$. 
Details can be found in the Appendix.

\begin{algorithm}[h]
    \SetAlgoLined
    {\bf Input:} Remaining configurations $\Omega$\;
    {\bf Output:} Configuration for next probe $C_{prob}$\;
    sort\_by\_upper\_bound($\Omega$)\; 
    \lIf{$\frac{\Delta T_1}{\Delta l_1} \leq |\frac{\Delta T_2}{\Delta u_2}+\cdots+\frac{\Delta T_{n}}{\Delta u_n}|$}{
    $C_{prob} \leftarrow \Omega_1$
    }\lElse{
    $C_{prob} \leftarrow \Omega_2$
    }
    $C_{prob}.s \leftarrow c \times C_{prob}.s$\; 
    \Return $C_{prob}$ \;
     \caption{\scheduler--\grad}
     \label{alg:gradient}
    \end{algorithm}

\stitle{Performance Analysis.} {In practice, \aml is used in two scenarios. Scenario (i): during exploration, users want to try a few configurations (e.g., verifying usefulness of a few new features) as an intermediate step. The best configuration will decide the follow-up trials, but it does not serve as the final configuration, and does not require full training. Scenario (ii): at the end of the exploration, users need to get the trained classifier corresponding to the selected configuration $C_{i'}$. 
In scenario (ii), the total running time involves not only the time to select the approximate best configuration, but also the time taken to train the classifier on full data with $C_{i'}$. When $C_{i'}$ is fixed, both scenario (i) and (ii) share the same optimal scheduler.
Under certain conditions, we are able to prove a 4-approx guarantee for \grad. Please find details in the Appendix.}

{\new
\stitle{Remark.} We have also analyzed a well-known scheduling scheme, called \ucb. Theoretical and experimental results can be found in the Appendix.
}
\section{Experiments}\label{sec:exp}
This section evaluates the efficiency and effectiveness of our \aml module. 
First, we evaluate whether \aml successfully selects top configuration and meanwhile reduces the total running time. 
Second, we compare the CI-based pruning with an existing pruning algorithm based on point estimation.
We also compare different scheduling schemes in the Appendix.

 \begin{figure*}[t]

    \begin{subfigure}{0.69\textwidth}
        \centering
        \includegraphics[width=\linewidth]{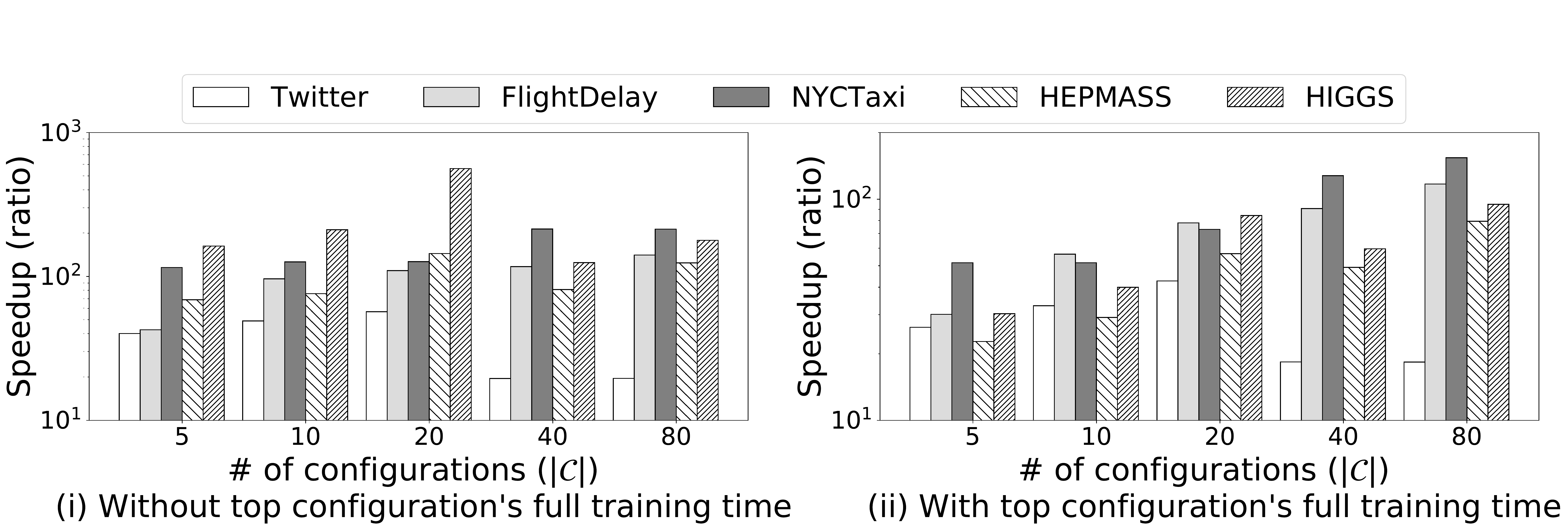}
        \caption{Speedup Comparison}
        \label{fig:speedup}
    \end{subfigure}
    \begin{subfigure}{.3\textwidth}
        \centering
        \includegraphics[width=\linewidth]{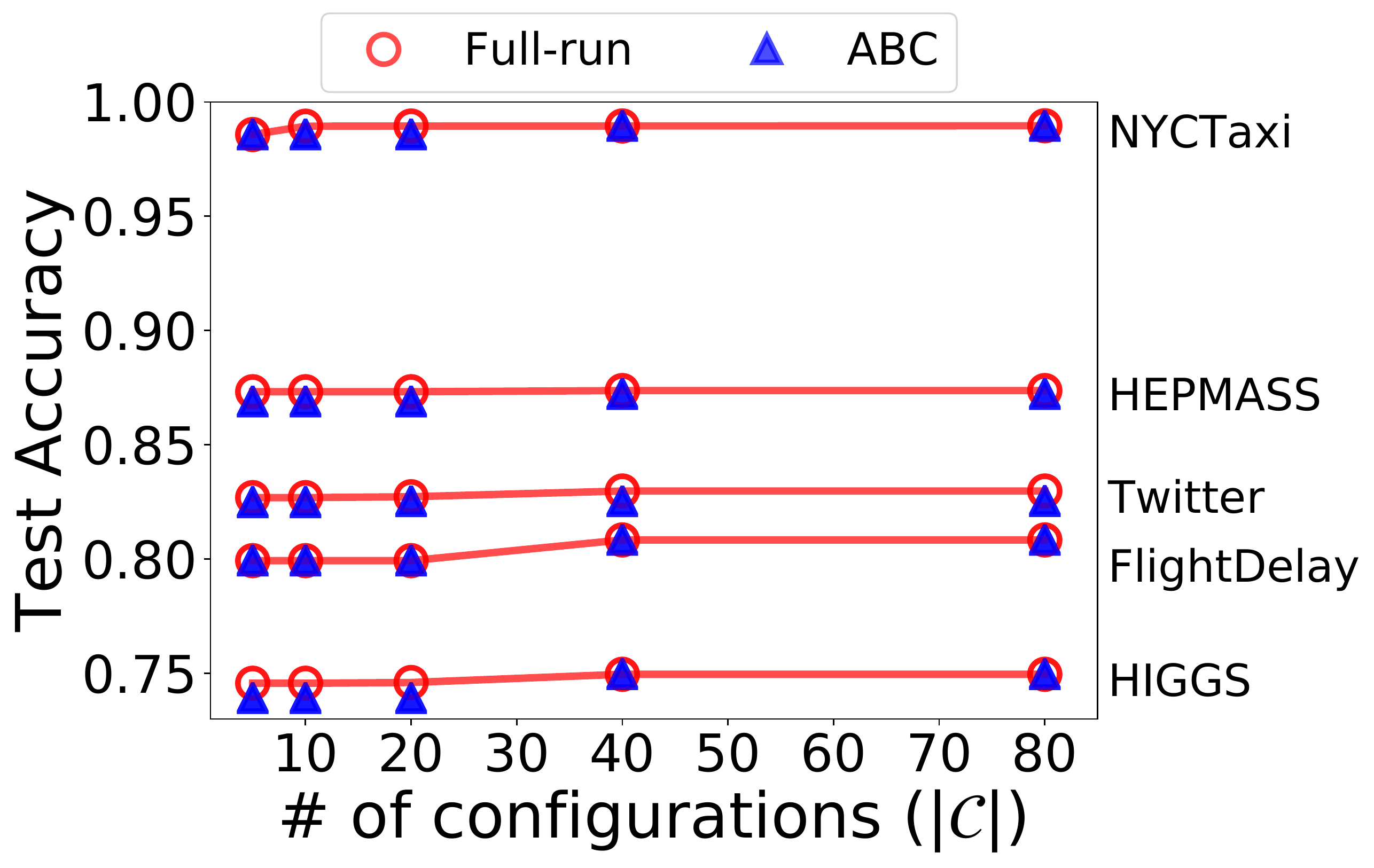}
        \caption{Accuracy Comparison}
        \label{fig:CI_accuracy}
    \end{subfigure}
    \caption{Comparison Between Full-run and \aml}
    \label{fig:comp}
\end{figure*}
     
\subsection{Experimental Setup}\label{ssec:setup}
\stitle{Configurations.} We focus on the task of classifying featurized data in our evaluation. Specifically, we choose five widely used and high-performance learners: {\em LogisticRegression, LinearSVM, LightGBM, NeuralNetwork,} and {\em RandomForest}. Each classifier is associated with various hyperparameters, e.g., the number of trees in RandomForest and the penalty coefficient in LinearSVM. In total they have 29 discrete or continuous hyperparameters. 
In our experiments, we use random search to generate each hyperparameter value from its corresponding domain. 

\stitle{Datasets.} We evaluate with five large-scale machine learning benchmarks that are publicly available. As discussed in introduction, the motivation of \aml is to handle large datasets and quickly select the approximate best configuration. Thus, the datasets evaluated in our experiments are all at the scale of millions of records ($|\dd|$) and with up to 10K features ($|\ff|$). We do not use the AutoML benchmarks such as HPOlib~\cite{NIPSWorkshop:EggFeuBerSnoHooHutLey13hpolib} or OpenML~\cite{KDDExploration:OpenML2013}, which mainly contain small or median-sized datasets (up to 50K records).
The statistics of each dataset are depicted in Table~\ref{table:dataset}. We used min-max normalization for all datasets, and n-gram extraction as well as model-based top-K feature selection for Twitter. The processed datasets are available from \url{https://www.microsoft.com/en-us/research/people/chiw/#!downloads}.

\stitle{Algorithms.} We compare our proposed \aml with the standard approach named {\em Full-run}. 
For each configuration, Full-run first trains the classifier with full training data, and then tests it on the full testing data. Afterwards, it returns the configuration with the highest testing accuracy. This method is supported in mature tools like scikit-learn and Azure ML.
Existing approaches to best configuration selection, such as DAUB~\cite{AAAI:Sabharwal16} or Successive-halving~\cite{AISTATS:jamieson2016non}, are heuristics without accuracy guarantee. Our solution and such heuristics are not apple-to-apple comparison, as they cannot ensure $\epsilon$-approximation guarantee on accuracy. {Nevertheless, we conduct a best effort comparison with them.}

\eat{Successive-halving is proposed as a pruning strategy to evaluate the configurations with a resource budget. In our case the resource budget corresponds to the sum of the sample size for all configurations across all iterations. In each iteration, Successive-halving trains a classifier with the sampled training data for each remaining configuration, sorts the configurations in descending order of the test accuracy on $\dd_{te}$, and then eliminates the second half of the remaining configurations. In the next iteration, the sample size for each remaining configuration is doubled, while the number of remaining configurations is halved. As a consequence, Successive-halving terminates after $\log|\cc|$ iterations with one remaining configuration, and the resource budget is equally allocated to all the iterations.

It is notable that our CI-based framework \aml and Successive-halving have very different pruning strategies, though they share the same intuition that the classifier trained with sampled data can be used as a proxy of the classifier trained with full training data. 
Our solution is designed to satisfy a constraint of the accuracy, and the running time is approximately optimal under that constraint. Successive-halving is designed to satisfy a constraint of the resource budget (hence the running time), and no guarantee of the accuracy is provided.

The original proposal of Successive-halving~\cite{AISTATS:jamieson2016non} did not perform sampling over the testing data. That limits its efficiency for large datasets. Thus, for fair comparison, we also perform sampling for the testing data in Successive-halving as in our framework. 
That boosts the efficiency of Successive-halving dramatically, while the accuracy loss remains almost the same.}

\stitle{Setup.} We conducted our evaluation on a VM
with 8 cores and 56 GB RAM. The initial training sample size and testing sample size are 1000 and 2000 respectively. The geometry step size is set to be $c= 2$. $\epsilon = 0.01, \delta=0.5$. Since $\delta$ is under the $\log$ term, the result is not sensitive to $\delta$. We also conduct experiments with varying $\epsilon$, as shown in the Appendix. 

We use the same set of sampled configurations for both Full-run and \aml. We vary the number of input configurations from 5 to 80. Since we focus on large datasets, it already takes a day or half to finish Full-run with 80 configurations for a single dataset. So unlike the case of small datasets, 80-100 is a realistic number because that is how many configurations a user can try with Full-run within a reasonable period of waiting. 

\subsection{\aml vs. Full-run}\label{ssec:exp_CI}

We compare \aml against Full-run from two perspectives, running time and accuracy.
We first compute the {\em speedup} achieved by \aml, where speedup is defined as the ratio between Full-run's total running time and ours. Next, we compare the configuration $C_{i'}$ returned by our \aml with the best configuration $C_{i^*}$ provided by Full-run in terms of real test accuracy.

\stitle{Efficiency Comparison.} 
{As discussed, \aml is used in two scenarios in practice. During exploration, users only need the selected configuration to decide the follow-up trials, but do not require full training with the selected configuration. At the end of the exploration, users need to train the classifier with the selected configuration in the full data.}
Thus, we evaluate the running time speedup in these two scenarios: {\em (i)} we first compare the selection time between our \aml and Full-run as depicted in Figure~\ref{fig:speedup}(i); {\em (ii)} we then compare the total running time including the time to train the final classifier, in Figure~\ref{fig:speedup}(ii). Our solution is on average 190$\times$ faster than Full-run in scenario (i), and is on average 60$\times$ faster than Full-run in scenario (ii). Furthermore, 23 out of 25 experiments (i.e., 5 different datasets times 5 different configuration set size) has at least $|\cc|\times$ speedup in scenario (i), and 22 out of 25 experiments achieve at least $|\cc|\times$ speedup in scenario (ii). This means that the running time of \aml is even faster than fully evaluating one average configuration in most cases, which further means even a perfectly distributed Full-run can't beat the non-distributed \aml. 

The speedup on dataset Twitter is consistently lower than other datasets. This is mainly because Twitter is one order of magnitude smaller than the other datasets. With the same sample size, the sampling ratio is higher than the other datasets, which causes lower speedup.

\stitle{Effectiveness Comparison.} As illustrated in Figure~\ref{fig:CI_accuracy}, \aml successfully selects the configuration whose real test accuracy is within 0.01 from the best configuration's real test accuracy in {all} of our experiments.
In particular, when $|\cc| = $ 40 or 80, \aml successfully identifies the exact best configuration for FlightDelay, NYCTaxi, and HIGGS. The largest deviation is around 0.0068 when $|\cc| = $ 20 for HIGGS. 
 
\stitle{Takeaway.} Compared to Full-run, our proposed \aml can successfully select a competitive or identical best configuration but with much less time.

\subsection{CI-based Pruning vs. Successive-halving}\label{ssec:exp_sh}
Next, we compare our proposed CI-based pruning with Successive-halving. Successive-halving was proposed as a pruning strategy to evaluate iterative training configurations with a resource budget of the total number of iterations of all configurations. We modify it to use the total sample size as the resource budget. In each round, it trains a classifier with the sampled data for each remaining configuration, and then eliminates the half of the low-performing configurations. This pruning is based on point estimation, i.e., they directly use $\calA(H_{S_{tr}},D_{te})$ to approximate $\calA(H_{tr},D_{te})$, in contrast to using confidence interval as \aml. It repeats until there is only one remaining configuration.

Since the two solutions are designed to satisfy different constraints (accuracy loss and resource), they are not directly comparable. We do our best to make a fair comparison. In this section, we run Successive-halving with identical sample size sequence as \aml, to compare the CI-based pruning and the halving strategy based on point estimation. We perform the same set of experiments as in the previous experiment section for Successive-halving.
We introduce a metric, called {\em relative accuracy loss}, to measure the difference between the returned configuration $C_{i'}$ and the best configuration $C_{i^*}$ in terms of the test accuracy:
    $\Delta_{rel} = \frac{|\calA_{i^*} - \calA_{i'}|}{\calA_{i^*}}$.
The smaller $\Delta_{rel}$ is, the better.

\begin{figure}[h]
\centering
    \begin{subfigure}{0.49\textwidth}
        \centering
        \includegraphics[width=\linewidth]{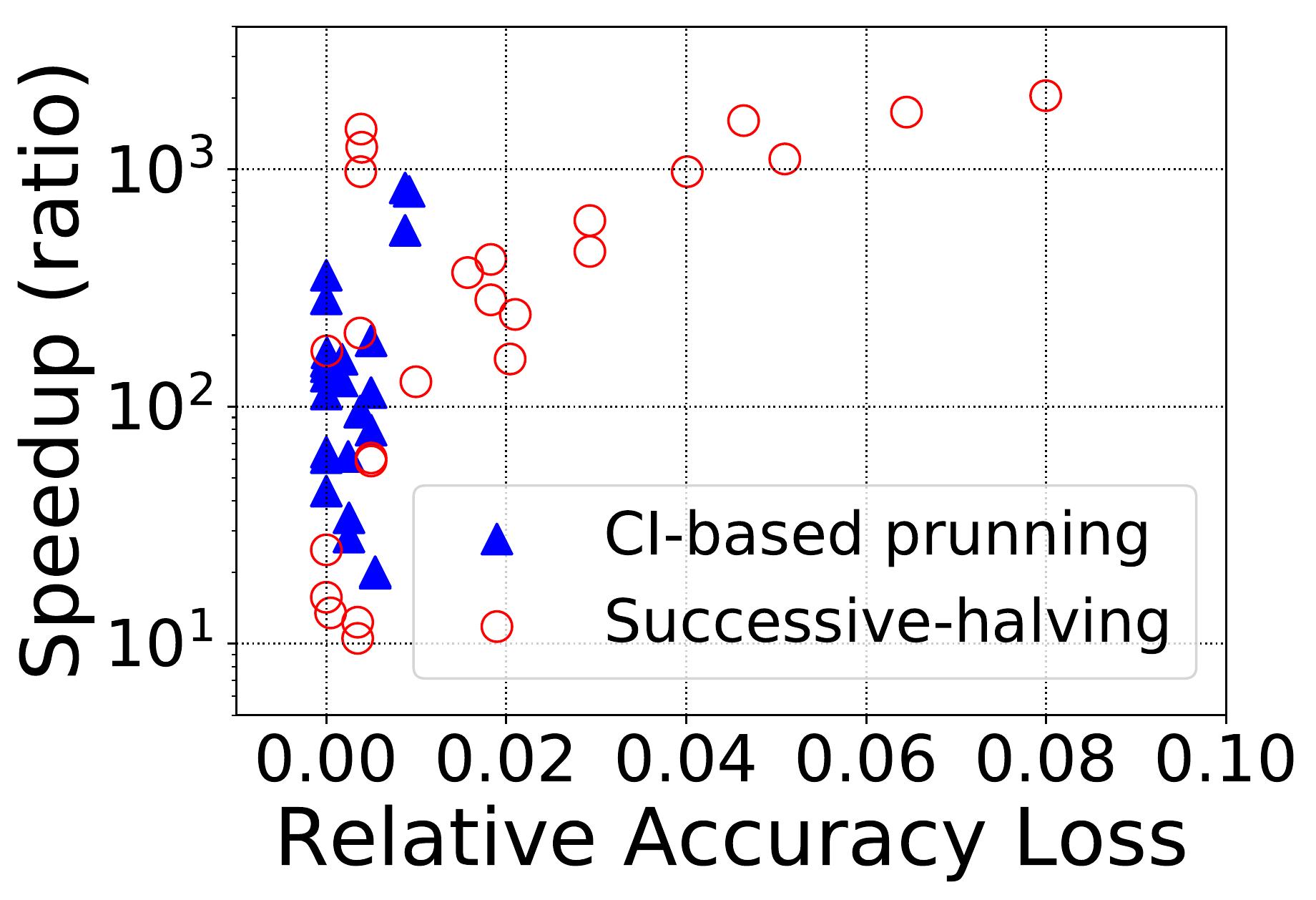}
        \caption{Speedup vs. $\Delta_{rel}$} 
        \label{fig:SH_scatter}
    \end{subfigure}
    \begin{subfigure}{.49\textwidth}
        \centering
        \includegraphics[width=\linewidth]{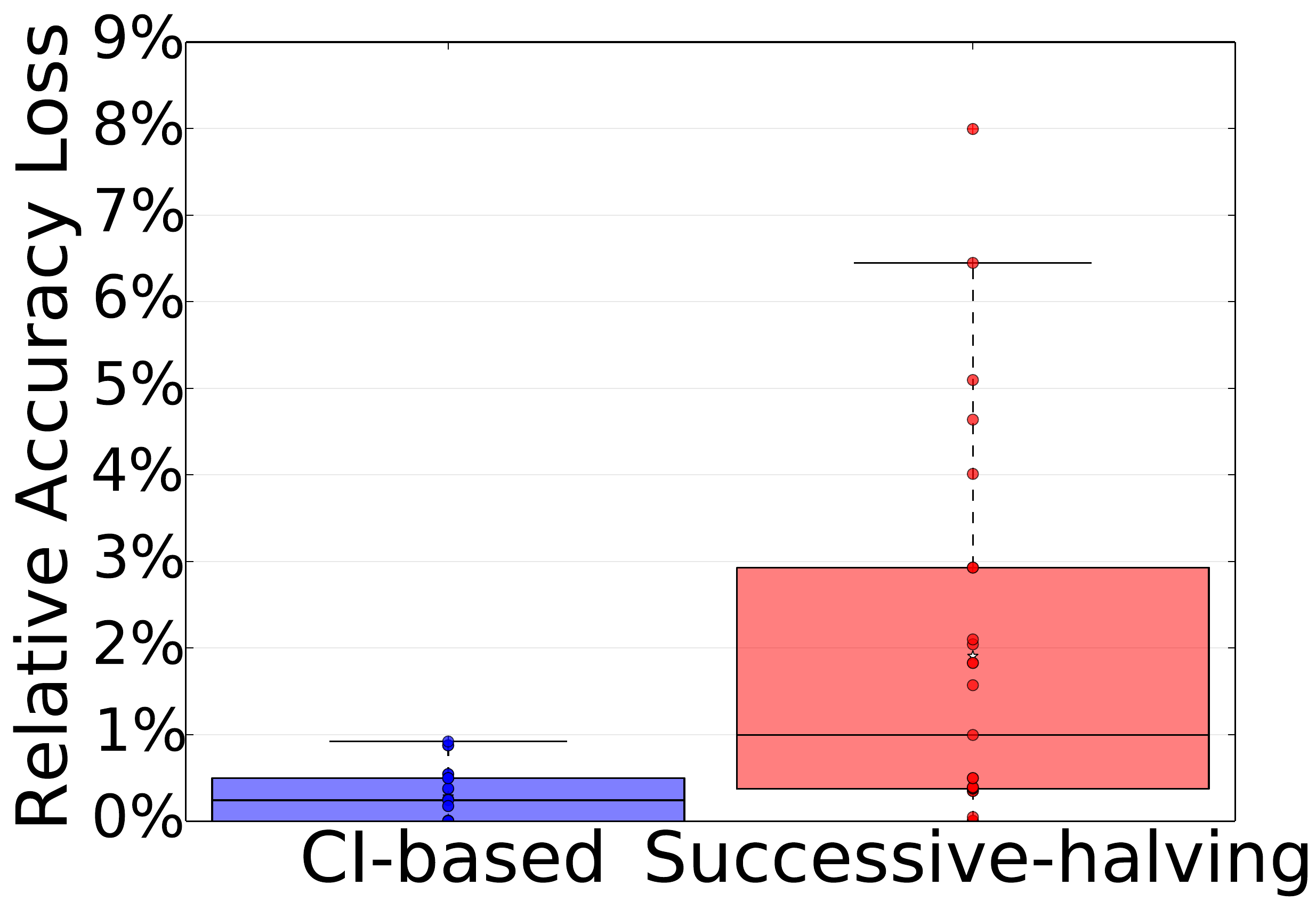}
        \caption{Boxplot of $\Delta_{rel}$}
        \label{fig:SH_percentage}
    \end{subfigure}
    \caption{\aml vs. Successive-halving} 
    \label{fig:SH_delta}
\end{figure}

We depict the comparison between Successive-halving and our \aml in Figure~\ref{fig:SH_delta}. The x-axis in Figure~\ref{fig:SH_scatter} refers to the relative accuracy loss compared to the best configuration by Full-run, the y-axis is the speedup compared to the running time of Full-run, and each point corresponds to a specific experiment with a certain dataset and $\cc$. We can see that Successive-halving has a similar speedup as \aml over Full-run. However, the relative accuracy loss can be an order of magnitude larger than that of \aml, e.g., $8\%$ vs. $0.8\%$. This is because the pruning performed in Successive-halving is based on the ranking of the current test accuracy. 
On the contrary, \aml uses confidence interval of the real test accuracy to perform safer pruning. 
Figure~\ref{fig:SH_percentage} presents a boxplot summarizing the relative accuracy loss for our solution and Successive-halving respectively. On average, the relative accuracy loss for our CI-based solution is $0.24\%$ (all below 1\%), and 2\% for Successive-halving (up to 8\%), which is nearly ten times larger.  
{\new In order to fully compare Successive-halving and our \aml, we have also conducted another set of experiments with varying time constraint. The results can be found in the Appendix. In addition, we report the performance of an enhanced algorithm of DAUB using our CIEstimator and CI-based pruning in the Appendix. }

\section{Related Work}\label{sec:related}
\stitle{AutoML.} AutoML has gained increasing attention in the past few years. 
The scope of AutoML includes automated feature engineering, model selection, and hyperparameter tuning process.
Some prevailing AutoML tools are Auto-sklearn for Python~\cite{NIPS:eurer2015} and Auto-Weka for Java~\cite{KDD:thornton2013auto}. 
Most research focus is devoted to the search strategy, i.e., which configurations to evaluate. The strategies can be broadly categorized as grid search~\cite{JMLR:Pedregosa2011}, random search~\cite{JMLR:bergstra2012random}, spectral search~\cite{ICLR:hazan2018hyperparameter}, Bayesian optimization~\cite{COLT:hutter2011sequential,NIPS:bergstra2011algorithms,NIPS:snoek2012practical}, meta-learning~\cite{NIPS:eurer2015}, and genetic programming~\cite{GECCO:Olson2016}. 
Few studies address the efficiency issue in ranking these configurations on large datasets. 
TuPAQ~\cite{SoCC:sparks2015automating} and HyperDrive~\cite{Middleware:rasley2017hyperdrive} are two systems which focus on hyperparameter tuning when all the configurations correspond to iterative training processes. They distribute the configurations into multiple machines, and use heuristic early stopping rules for training iterations.
 \cite{AISTATS:jamieson2016non} further models this problem as a non-stochastic multi-armed bandit process, where each arm corresponds to a configuration, each pull corresponds to a few training iterations, and the reward is the intermediate accuracy on the test data. Recognizing the difference with the stochastic bandit process, they propose a Successive-halving pruning strategy in the fixed budget setting. They focus on this setting because they have found it difficult to derive the confidence bounds of real test accuracy based on limited training iterations.
 Hyperband~\cite{ICLR:li2017hyperband} uses Successive-halving as a building block and tries to vary the number of random configurations under the same budget. While Hyperband suggests that the notion of resource can be generalized from training iterations to sample size of training data, we should notice that it is now possible to derive confidence bounds of real test accuracy based on sampled training data. Therefore, \aml can be used to replace Successive-halving in this scenario to achieve lower accuracy loss. In the Bayesian optimization framework, RoBO~\cite{bayesopt:klein17} treats the sample size as a hyperparameter, and uses random sample size to evaluate each configuration and a kernel function to extrapolate the real test accuracy.

\stitle{Generalization Error Bounds.} Generalization error bound has been studied extensively~\cite{JC:zhou2002covering,TAC:koltchinskii2000improved,JMLR:bousquet2002stability}, among which {\em VC-bound}~\cite{TNN:vapnik1999overview} is a well-known technique for bounding the generalization error. The main idea behind VC-bounds is to use {\em VC-dimension} to characterize the complexity of the hypothesis class. 
Besides VC-dimension, other existing techniques for deriving generalization error bounds include covering number~\cite{JC:zhou2002covering}, Rademacher complexity~\cite{TAC:koltchinskii2000improved}, and stability bound~\cite{JMLR:bousquet2002stability}. While the definition of generalization error bounds is different from the confidence bound needed for \aml, they have been used in other work to guide progressive sampling for a \emph{single} configuration~\cite{AAAI:elomaa2002progressive}.

\section{Discussion and Conclusion}
We studied the problem of efficiently finding approximate best configuration among a given set of training configurations for a large dataset. Our CI-based progressive sampling and pruning solution \aml can successfully select a top configuration with small or no accuracy loss, in much less time than the exact approach. The CI-based pruning is more robust than pruning based on point estimates.

There are multiple use cases that can benefit from our proposed \aml. The input of \aml can be either specified by the users based on their domain knowledge, or generated from an AutoML search algorithm. 
Our \aml module can help data scientists select a top configuration faster. As they iteratively refine it, they can use \aml to verify whether altering part of the configuration (such as changing features) boosts the performance, by invoking \aml with the old and new configurations. 
In addition, our confidence bounds can be potentially used to accelerate Bayesian optimization and spectral search in large datasets, which is interesting future work. 

{
\small
\bibliographystyle{aaai.bst}
\bibliography{all}
}

\appendix
\appendixpage

\eat{
\section{Exploitative Assumption}
\stitle{Discussion.} As discussed in the main paper, we are using the exploitative assumption in deriving the lower bound: {$\calA_i(H_{S_{tr}}, D_{te}) \leq \calA_i(H_{tr}, D_{te})$} for any $C_i$, and when this assumption is violated, we can perform a post-processing step after the algorithm finishes. In the following, we will illustrate how this post-processing step can preserve the $\epsilon$-guarantee. 

First, assume there exists some configuration {$C_i$} and {$S_{tr}$} with { $\calA_i(H_{S_{tr}}, D_{te}) > \calA_i(H_{tr}, D_{te})$}. Let us consider the following two scenarios: {\em (a)} { $C_i$} is not selected as the best configuration by \aml; {\em (b)} $C_i$ is selected by \aml. 
For scenario (a), the violation (i.e., { $\calA_i(H_{S_{tr}}, D_{te}) > \calA_i(H_{tr}, D_{te})$}) will not affect the correctness of our CI-based pruning. That is because, first, the violation only increases the lower bound of { $C_i$} and makes { $C_i$} harder to be pruned. Second, any configuration pruned by $C_i$ will also be pruned by the final returned configuration $C_{i'}$, since the lower bound of $C_{i'}$ is no lower than any pruning lower bound $l_{i_r}^{(r)},\forall r \in [R]$ (ref. the second paragraph in the proof of Theorem~\ref{thn:prob1}).
For scenario (b), we can replace { $H_{tr}$} with { $H_{S_{tr}}$} in the final result, i.e., return the classifier trained using $S_{tr}$ instead of { $\dd_{tr}$} for configuration { $C_i$}. First, this is reasonable, because users would prefer a better classifier { $H_{S_{tr}}$} to a worse one { $H_{tr}$}. Second, since $C_i$ is selected by \aml, it is easy to check whether the assumption is violated or not by comparing { $\calA_i(H_{S_{tr}}, D_{te})$} with { $\calA_i(H_{tr}, D_{te})$}. Last, the $\epsilon$-guarantee still holds, since the confidence interval computed based on Theorem~\ref{thn:upper} and \ref{thn:lower} is correct for { $H_{S_{tr}}$}. 
}

\section{Optimal Step Size in Geometric Scheduling} \label{sec:stepSize}
When $T_i(s)$ is a power function over the sample size, i.e., $T_i(s) = s^\alpha$ where $\alpha$ is a real number, the optimal step size follows $c = 2^{\frac{1}{\alpha}}$.
\begin{proof}
Assume $s_i^*$ is the {\em optimal} training sample size for each configuration $C_i$ when Algorithm~\ref{alg:CI-based} terminates.
As we do not know the optimal $s_i^*$ in advance, we try probes with progressive sample size for each configuration $C_i$, denoted as $\{s_i^1, s_i^2, \cdots s_i^m\}$, where $s_i^m$ is the sample size in the last probe of $C_i$ when Algorithm~\ref{alg:CI-based} terminates. For a fixed $i$, we assume that the last probe of all the other configurations uses their optimal training sample size, i.e., $s_{i'}^m=s_{i'}^*,\forall i'\neq i$. Thus, the termination point for $C_i$ must satisfy (a) $s_i^m \geq s_i^*$, otherwise $s_i^m$ would be a smaller sample size for configuration $C_i$ than $s_i^*$; and (b) $s_i^{m-1} < s_i^*$, otherwise Algorithm~\ref{alg:CI-based} would terminate at $s_i^{m-1}$ instead of $s_i^{m}$. 

Based on the above property, we first claim that with geometric schedule (i.e., $s_i^j=c^{j-1}s_i^1$, where $s_i^1$ is the sample size in the initial probe and $c$ is the geometric step size), the accumulated running time is asymptotically equivalent to the optimal running time (i.e., $\sum_{j=1}^m T_i(s_i^j) = \mathcal{O}(T_i(s_i^*))$)~\cite{KDD:provost1999}. Recall that $T_i(s) = s^\alpha$ is the probing time with a sampled training dataset of size $s$ for configuration $C_i$.

We further minimize the worst case ratio between the accumulated running time and the optimal running time, i.e., $\frac{\sum_{j=1}^m T_i(s_i^j)}{T_i(s_i^*)}$. Since $s_i^{m-1} < s_i^* \leq s_i^m$, the worst case occurs when $s_i^* = s_i^{m-1}$. By replacing each term $T_i(s)$ with $s^\alpha$ and solving $\min_c \frac{\sum_{j=1}^m T_i(s_i^j)}{T_i(s_i^{m-1})}$, we can derive a closed form solution for $c$. When the step size follows $c = 2^{\frac{1}{\alpha}}$, $\frac{\sum_{j=1}^m T_i(s_j)}{T_i(s_i^{m-1})}$ is minimized and it is guaranteed that $\frac{\sum_{j=1}^m T_i(s_j)}{T_i(s_i^*)} \leq 4$ in any case.
For instance, when the training time $T_i(s)$ increases linearly with the sample size $s$, i.e., $\alpha = 1$, then we should set $c$ to 2, which means we should double the sample size as the probing proceeds for each configuration. In fact, Provost et.al.~\cite{KDD:provost1999} set $c=2$ heuristically and found good empirical performance. Our analysis provides a theoretical justification for that heuristic.
\end{proof}

\section{Performance Analysis of \grad} \label{sec:grad_proof}
{\new
It is hard to bound the performance of \grad in general cases. However, under certain conditions, we are able to prove a 4-approx guarantee for \grad with respect to the oracle optimal running time when $\epsilon=0$ for the scenario where users need to get the trained classifier corresponding to the selected configuration $C_{i'}$. Remember that in this scenario, the total running time involves both the time to select the approximate best configuration, and the time taken to train the classifier on full data with $C_{i'}$. 
\begin{assumption} [CI Condition] \label{ass:CI}
All confidence intervals correctly bound the test accuracy and shrink as Algorithm~\ref{alg:CI-based} proceeds.
\end{assumption}
\begin{assumption} [Convex Condition] \label{ass:convex}
$f_i$ and $g_i$ are convex functions of $l_i$ and $u_i$ respectively.
\end{assumption}
\begin{assumption} [Proxy Condition] \label{ass:proxy}
For $C_i \in \cc$, $\frac{\Delta f_i}{\Delta l_i}=\frac{\Delta T_i}{\Delta l_i}$ and $\frac{\Delta g_i}{\Delta u_i}=\frac{\Delta T_i}{\Delta u_i}$.
\end{assumption}
\noindent First, CI Condition means that we only focus on the case where all the confidence intervals correctly bound the test accuracies. In this case, for each configuration, its lowest upper bound and highest lower bound in all the rounds still bound the real test accuracy correctly, and the upper (lower, resp.) bound is monotonically decreasing (increasing, resp.) as Algorithm~\ref{alg:CI-based} runs.
Second, the convex condition means that the increase of sample size has a diminishing return to the change of the CI -- as Algorithm~\ref{alg:CI-based} proceeds, it takes longer to attain the same increase (decrease, resp.) on the lower bound (upper bound, resp.). Last, since the gradient $\frac{\Delta f_i}{\Delta l_i}$ ($\frac{\Delta g_i}{\Delta u_i}$, resp.) is impossible to compute, we can only use $\frac{\Delta T_i}{\Delta l_i}$ ($\frac{\Delta T_i}{\Delta u_i}$, resp.) to approximate it. The proxy condition means such approximation is accurate and effective.

\begin{theorem}[\grad 4-Approx]\label{thn:grad}
Under Assumption~\ref{ass:CI} to \ref{ass:proxy}, \grad provides a 4-approx guarantee to the oracle optimal running time to get the final trained classifier when $\epsilon=0$\eat{, with probability at least $1-\delta$}.
\end{theorem}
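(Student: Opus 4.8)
The plan is to race \grad against the oracle configuration-by-configuration: first characterize the oracle's optimal per-configuration sample sizes through the KKT conditions in Equation~\eqref{eqn:constraints}; then show that, under Assumptions~\ref{ass:CI}--\ref{ass:proxy}, \grad's geometric probe sequence on each configuration terminates within one step of that optimal size; then charge the overshoot via the geometric-scheduling guarantee, which costs a factor of at most $4$ per configuration; and finally add the common full-training term $T_1(|\dd_{tr}|)$, which is identical on both sides. Concretely, with $\epsilon=0$ Problem~\ref{prob:sch} is $\min f_1(l_1)+\sum_{i\ge2} g_i(u_i)$ subject to $l_1=u_2=\cdots=u_n$, whose optimum $(l_1^\star,u_2^\star,\dots,u_n^\star)$ satisfies Equation~\eqref{eqn:constraints}. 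Since scenario~(ii) additionally forces a full training of $C_1$ at cost $T_1(|\dd_{tr}|)$ — which, by exploitativeness, essentially for free yields the tightest attainable lower bound $\calA(H_{tr},\dd_{te})$ — the oracle optimum is $\calT^\star = T_1(|\dd_{tr}|)+\sum_{i\ge2} g_i(u_i^\star)$ with $u_i^\star=l_1^\star=\calA(H_{tr},\dd_{te})$; I translate each $g_i(u_i^\star)$ into an oracle sample size $s_i^\star$ for $C_i$ through $T_i(s_i^\star)=g_i(u_i^\star)$ (and $f_1$ for $C_1$, absorbed into $T_1(|\dd_{tr}|)$), and use this as the benchmark.

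The heart of the argument is showing that \grad's last probe size $s_i^{(m_i)}$ for each $C_i$ satisfies $s_i^{(m_i-1)}<s_i^\star\le s_i^{(m_i)}$. By Assumption~\ref{ass:CI} every upper bound is non-increasing and every lower bound non-decreasing along the run, so the sample sizes at which a configuration becomes prunable form an up-set, $s_i^\star$ is a well-defined target, and no earlier probe is wasted by a shrinkage reversal. I then argue that the myopic rule of Algorithm~\ref{alg:gradient} tracks Equation~\eqref{eqn:constraints}: by Assumption~\ref{ass:proxy} the test $\frac{\Delta T_1}{\Delta l_1}\le|\sum_{i\ge2}\frac{\Delta T_i}{\Delta u_i}|$ compares the true backward differences of $f_1$ and of $\sum_{i\ge2} g_i$, and by Assumption~\ref{ass:convex} these differences are monotone along the run, so the test flips sign at most once — exactly mirroring the first line of Equation~\eqref{eqn:constraints} — hence \grad neither under- nor over-probes $C_1$ by more than one geometric step relative to $s_1^\star$; likewise, always probing the configuration with the second-largest upper bound drives $u_2,\dots,u_n$ to the common KKT value, so each $C_i$ is probed until $u_i$ first reaches (or just passes) $l_1^\star$, i.e., to within one geometric step of $s_i^\star$. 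The fact that the running guess $\Omega_1$ of the best configuration may change before stabilizing on $C_1$ I handle by noting that under Assumption~\ref{ass:CI}, once $C_1$ has the largest lower bound the others are monotonically pushed below it, so the guess settles, and the handful of probes spent on earlier guesses are absorbed into the geometric partial sum of whichever configuration they were spent on.

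Given $s_i^{(m_i-1)}<s_i^\star\le s_i^{(m_i)}$ for every configuration, I invoke the geometric-scheduling bound from Section~\ref{sec:stepSize}: with step size $c=2^{1/\alpha}$ (and, under the diminishing-returns behavior of Assumption~\ref{ass:convex}, an analogous bound holds beyond pure power functions) one gets $\sum_j T_i(s_i^{(j)})\le 4\,T_i(s_i^\star)$, the worst case being precisely "$s_i^\star$ just below the last probe," which is exactly what the previous step guarantees. Summing over all configurations, the selection time of \grad is at most $4\sum_i T_i(s_i^\star)$; adding the final training $T_1(|\dd_{tr}|)\le 4\,T_1(|\dd_{tr}|)$ and using that $s_1^\star$'s cost is itself at most $T_1(|\dd_{tr}|)$, the total running time $\sum_i t_i + T_1(|\dd_{tr}|)$ is at most $4\calT^\star$, which is the claim.

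The main obstacle is the middle step — converting the intuitively KKT-tracking behavior of the greedy, finite-difference gradient rule into a rigorous "within one geometric step" statement. The difficulty is the coupling between \grad's discrete ``which configuration to probe'' decision and its ``how much to probe'' (geometric) decision, compounded by its use of backward differences rather than true derivatives and by the instability of the intermediate guess $\Omega_1$. I expect to control this with an exchange/potential argument: any allocation of \grad that differed from the oracle's by more than one step in some configuration would have required a gradient-test outcome opposite to the KKT-consistent one at some round, which is impossible by Assumption~\ref{ass:convex} (monotone differences) together with Assumption~\ref{ass:proxy} (faithful proxies).
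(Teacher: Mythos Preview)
Your high-level skeleton matches the paper's: show each configuration is probed to within one geometric step of its oracle-optimal size, then invoke the factor-$4$ geometric bound and add the common $T_1(|\dd_{tr}|)$. The place where your argument has a real gap is exactly the step you flag as the ``main obstacle,'' and your proposed resolution does not work.

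The problem is your treatment of the phase where $\Omega_1\neq C_1$. You write that probes spent on earlier (wrong) guesses are ``absorbed into the geometric partial sum of whichever configuration they were spent on.'' But the geometric factor-$4$ bound only applies \emph{given} that the last probe on $C_i$ is within one step of $s_i^\star$; it does not by itself prevent wrong-guess probes from pushing some $u_i$ strictly below $u_i^\star$. The paper closes this hole with a structural observation you do not have: whenever \grad's guess is wrong, the configuration it probes---whether the guessed best $\Omega_1$ or the second-highest $\Omega_2$---necessarily has upper bound at least $u_1\ge \calA_1\ge l^\star>u_j^\star$ (since $C_1$ competes in both the first- and second-place upper-bound rankings). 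Hence a wrong-guess probe \emph{cannot} push any $u_j$ below $u_j^\star$, regardless of gradients. This is an ordering argument about upper bounds, not a potential/absorption argument, and it is what makes the induction go through.

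Your ``the gradient test flips sign at most once by convexity'' is also not the right mechanism. The paper does not track sign changes of the test along the whole run; it uses convexity only in a single contradiction inside the correct-guess case: if every $u_i$ has already reached $u_i^\star$ while $l_1<l^\star$, then monotonicity of $\frac{df_1}{dl_1}$ and $|\frac{dg_i}{du_i}|$ forces the test to favor probing $C_1$, contradicting that some $C_j\neq C_1$ was chosen. Finally, your identification $l_1^\star=u_i^\star=\calA(H_{tr},\dd_{te})$ is not the benchmark the paper uses; $l^\star=u^\star$ is the abstract KKT level from Equation~\eqref{eqn:constraints}, and the inequality chain $u_1\ge\calA_1\ge l^\star$ is what links the real test accuracy to that level in the wrong-guess case.
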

}

\begin{proof}
\eat{We focus on the case where all the confidence intervals correctly bound the test accuracies, which occurs with probability at least $1-\delta$. In this case, for each configuration, its lowest upper bound and highest lower bound in all the rounds still bound the real test accuracy correctly. So without loss of generality, we can assume that with the geometric increase of the sample size, the confidence interval of each configuration shrinks as Algorithm~\ref{alg:CI-based} proceeds. That is, for each configuration the upper (lower) bound is monotonically decreasing (increasing) as Algorithm~\ref{alg:CI-based} runs. }
We illustrate our analysis with the help of Figure~\ref{fig:alg_analysis}. Without loss of generality, we consider $C_1$ as the best configuration $C_{i^*}$. Each vertical line corresponds to one configuration, with shrinking confidence intervals as Algorithm~\ref{alg:CI-based} proceeds {\new according to the CI condition}. 
The red and blue lines with the same length depict a corresponding pair of upper and lower bound after a particular probe in Algorithm~\ref{alg:CI-based}. Let $l^*=u^*$ be the optimal solution in Equation~\eqref{eqn:constraints}, as shown by the solid horizontal black line in Figure~\ref{fig:alg_analysis}. Furthermore, as shown in Figure~\ref{fig:alg_analysis}, let $l_1^*$ be the smallest lower bound of $C_1$ that is {\new larger}\eat{no less} than $l^*$, and $u_i^*$ be the largest upper bound of $C_i$ that is {\new smaller}\eat{no larger} than $u^*$, where $2\leq i\leq n$. In the following, we prove that with our proposed scheduling scheme \grad, for each configuration $C_i$, $2\leq i \leq n$, the upper confidence bound cannot cross below the red solid line $u_i^*$ in Figure~\ref{fig:alg_analysis}, i.e., Algorithm~\ref{alg:CI-based} terminates with $u_i \geq u_i^*$. 

\begin{figure}[h]
\centering
\includegraphics[height = 4.2cm, width=7cm]{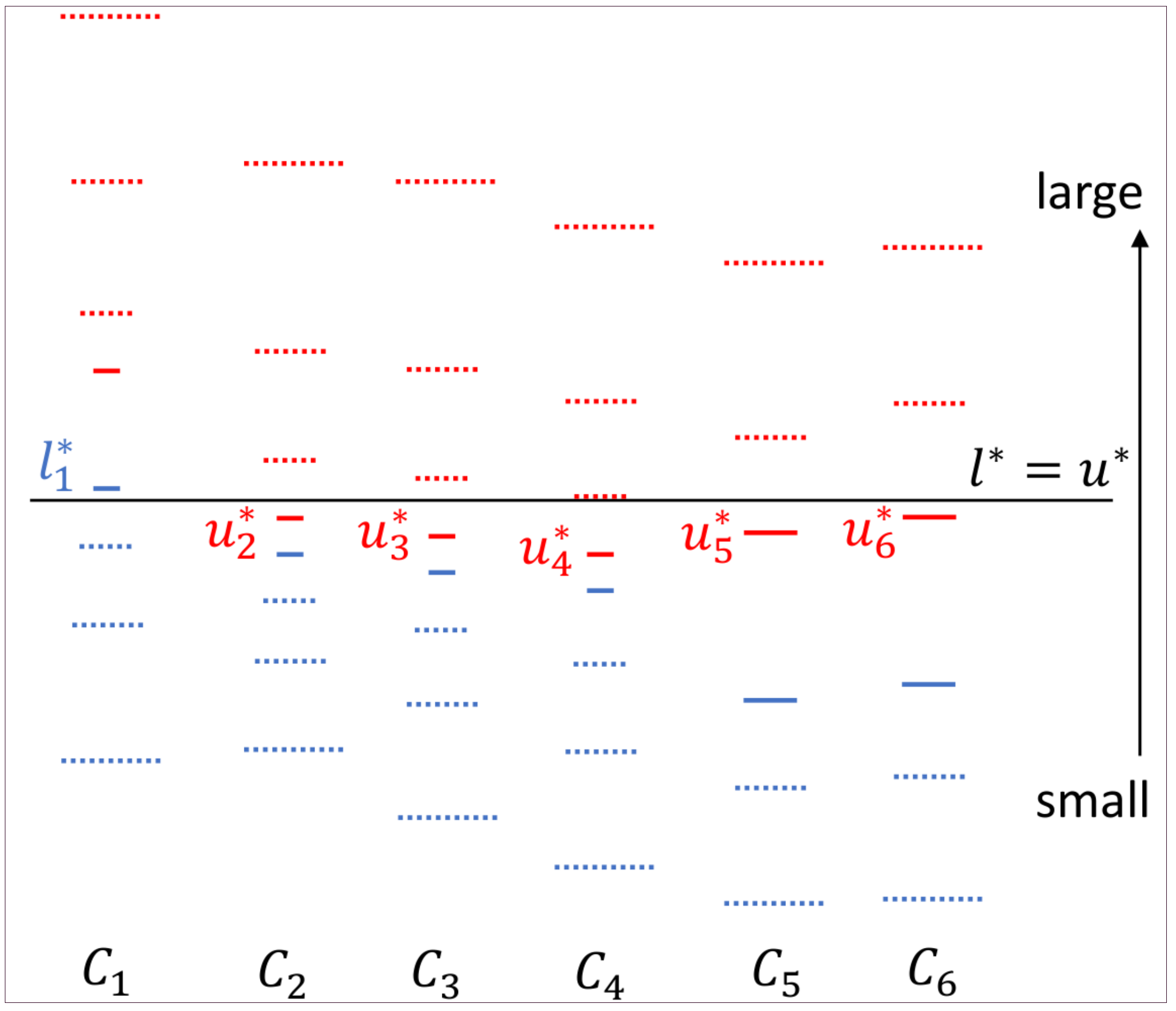}
\caption{Analysis of \grad}
\label{fig:alg_analysis}
\end{figure}

For any $2\leq i \leq n$, we will prove {\em by induction} that $u_i \geq u_i^*$ at every iteration of Algorithm~\ref{alg:CI-based}. First, it is obvious that in the first iteration (or probe), $u_i \geq u_i^*$ for any $2\leq i \leq n$. Next, suppose $u_i \geq u_i^*$ after the $(k-1)^{th}$ iteration of Algorithm~\ref{alg:CI-based}, we will show that $u_i \geq u_i^*$ after the $k^{th}$ iteration for any $2\leq i \leq n$ and $k \geq 2$. Since one probe is performed in each iteration, we only need to prove that for the probing configuration $C_{prob}$, $u_{prob} \geq u_{prob}^*$ still holds when $C_{prob} \neq C_1$. 
Recall that in each iteration, \grad makes the guess that the configuration with largest upper bound is the best configuration. In the following, we discuss two cases depending on whether that guess is right in the $k^{th}$ iteration. For notation simplicity, let $C_j$ be the probing configuration $C_{prob}$ in the $k^{th}$ iteration. 

\stitle{Case 1.} \grad has a wrong guess of the best configuration $C_1$. Suppose $C_{i'}$ is speculated as the best configuration. By definition, $u_{i'} \geq u_1 \geq \calA_1 \geq l^* > u_{i'}^*$. Hence, if the probing configuration $C_{j}$ is $C_{i'}$, then we have shown that $u_{j} > u_{j}^*$. Otherwise, the probing configuration $C_j$ must be the one with the second highest upper bound, according to \grad. In that case, $u_j \geq u_1$ since $C_1$ is also compared against when identifying the configuration with second highest upper bound, and  $u_1 \geq \calA_1 \geq l^* > u_{j}^*$ by definition. Thus, $u_j > u_{j}^*$. {\new Now we have shown that $u_j > u_j^*$ at the beginning of the $k^{th}$ iteration. Thus, we have $u_j \geq u_j^*$ after the $k^{th}$ iteration due to the one probe on $C_j$.}

\stitle{Case 2.} \grad has a correct guess on $C_1$. If $C_1$ is probed, then we are done since $u_i$ does not change for $2\leq i \leq n$. Otherwise, the probing configuration $C_j$ is the one with the second highest upper bound, according to \grad. In that case, we prove $u_j \geq u_j^*$ after the $k^{th}$ iteration {\em by contradiction}. If $u_j < u_j^*$ after the $k^{th}$ iteration, then $u_j$ must equal $u_j^*$ at the beginning of the $k^{th}$ iteration since we assume $u_j \geq u_j^*$ holds after the $(k-1)^{th}$ iteration. Next, we will show $u_i = u_i^*$ at the beginning of the $k^{th}$ iteration for all $2\leq i \leq n$ {\new and $i\neq j$}.

First, for $2\leq i \leq n$, $i\neq j$, and $u_i \in \Omega$ at the beginning of the $k^{th}$ iteration in Algorithm~\ref{alg:CI-based}, $u_i$ must equal $u_i^*$, since otherwise $C_j$ can not be with the second highest upper bound. Recall the $\Omega$ is the remaining candidate configurations. Second, for $2\leq i \leq n$, $i\neq j$, and $u_i \not\in \Omega$, we will show $C_i$ must have been pruned when $u_i$ equals $u_i^*$, by contradiction. Otherwise (i.e., $u_i>u_i^*$), $u_i \geq u^*$ and $C_j$ is thus pruned before the $k^{th}$ iteration since $u_j = u_j^* < u^* \leq u_i$, which contradict with the fact that $C_j \in \Omega$. 

Hence, we have $u_i = u_i^*$ at the beginning of the $k^{th}$ iteration for all $2\leq i \leq n$. Thus, $l_1 < l^*$, since otherwise all the other configurations are pruned. As a consequence, we have $\frac{df_1}{dl_1} \leq |\frac{dg_2}{du_2}+\cdots+\frac{dg_{n}}{du_n}|$, since $\frac{df_1}{dl_1}$ decreases with the decrease of $l_1$ and $|\frac{dg_i}{du_i}|$ increases with the decrease of $u_i$ according to the convex condition. Then based on \grad, $C_1$ should be probed, which contradicts with the assumption that the probing configuration $C_{j}$ is with the second highest upper bound. 
Altogether, $u_j \geq u_{j}^*$ for case 2.


Combining case 1 and case 2, we now have shown that when Algorithm~\ref{alg:CI-based} terminates, $u_i \geq u_i^*$, $\forall 2\leq i\leq n$. Equivalently speaking, each configuration $C_i$ ($2\leq i \leq n$) is probed at most one more time compared to the optimal scheme (the solid horizontal black line in Figure~\ref{fig:alg_analysis}). In addition, given a configuration, each probe's running time is twice of that in its previous probe, since we have set $c^{\alpha} = 2$.
Thus, the accumulated running time of \grad is at most 4 times of the optimal runtime for any configuration $C_i$, where $2\leq i \leq n$. That is:
\[t_i \leq 4 {t_i^*}, \forall 2\leq i \leq n\]
where $t_i^*$ is the optimal running time corresponding to the optimal scheme for identifying the best configuration and $t_i$ is the accumulated running time for $C_i$ in \grad. In the worst case, $C_1$ is probed all the way till with full training data.
With $c^\alpha = 2$, we have $t_1 \leq 2 T_1(|\dd_{tr}|)$.

{\new Recall that in the scenario where users need to get the final trained classifier, the total running time has two terms: the time taken to select the approximate best configuration and the time to get the trained classifier corresponding to the selected configuration (i.e., $\calT = \sum_i {t}_i$ + $T_1(|\dd_{tr}|)$).} Since $\calT_{min} = \sum_{i=1}^n{t_i^*} + T_1(|\dd_{tr}|)$ and $\calT = \sum_{i=1}^n{t_i} + T_1(|\dd_{tr}|)$, we have $\calT \leq 4 \calT_{min}$.
\end{proof}

{\new
\section{\ucb as Scheduler} \label{sec:gucb_proof}
{Upper Confidence Bound (UCB)} is a common scheduling scheme in multi-armed bandit problem. To apply UCB to our problem, in each iteration we always pick the configuration with the highest upper confidence bound for probing. The intuition is that the upper bound reflects the potential of this particular configuration, and thus the configuration with higher upper bound deserves more exploration. 
UCB tries to push down the highest upper bound, which kind of matches the second condition in Equation~\eqref{eqn:constraints}. However, UCB does not take lower bound's growth rate and upper bound's decrease rate into consideration, i.e., the first condition in Equation~\eqref{eqn:constraints}. Intuitively, when the configuration with the highest upper bound takes a long time for probing but the confidence interval shrinkage is only marginal, we should opt for probing other configurations.
Motivated by this, our proposed \scheduler, i.e., \grad, have taken the potential (i.e., upper bound), together with the upper bound's decrease rate and the lower bound's growth rate into consideration.

To compare the gradient-based approach (i.e., \grad) with \ucb, we also apply geometric scheduling within each configuration for \ucb. Under certain conditions, we are able to show a $4\times$ approximation guarantee for \ucb in Theorem~\ref{thn:ucb} when $\epsilon=0$.

\eat{To compare the gradient-based approach with \ucb, we combine the geometric probing in each configuration with \ucb across different configurations, titled \gucb. Under certain conditions, we show a $4\times$ approximation guarantee for \ucb in Theorem~\ref{thn:ucb} when $\epsilon=0$.}

\eat{Combining the geometric scheduling in each configuration with \ucb across different configurations respectively, we obtain a scheduling scheme for \scheduler, titled \gucb. We show the approximation guarantee for \gucb in Theorem~\ref{thn:ucb}. In particular, when $\epsilon=0$, the run time of \gucb is within $4\times$ to the optimal total runtime with high probability. }

\begin{theorem}[\ucb Approx Guarantee]\label{thn:ucb}
Under Assumption~\ref{ass:CI}, \ucb provides a $4$-approx guarantee to the optimal running time to get the final trained
classifier when $\epsilon=0$.
\end{theorem}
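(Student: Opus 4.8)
The plan is to mirror the argument for Theorem~\ref{thn:grad}, but exploiting the fact that \ucb always probes the configuration with the current highest upper bound, which makes the analysis strictly simpler than that of \grad. I would reuse the same setup: assume without loss of generality that $C_1 = C_{i^*}$ is the best configuration, let $l^* = u^*$ denote the optimal solution of Equation~\eqref{eqn:constraints} with $\epsilon = 0$, and for each $2 \le i \le n$ let $u_i^*$ be the largest upper bound that $C_i$ attains over its discrete sequence of probes that is still smaller than $u^*$ (as in Figure~\ref{fig:alg_analysis}). The goal is to show that Algorithm~\ref{alg:CI-based} equipped with \ucb terminates with $u_i \ge u_i^*$ for every $2 \le i \le n$, i.e., each non-best configuration is probed at most one more time than the oracle-optimal scheme.

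The core step is an induction over the iterations of Algorithm~\ref{alg:CI-based} showing that $u_i \ge u_i^*$ holds for all $2 \le i \le n$ at every iteration. The base case is immediate. For the inductive step only the probed configuration $C_{prob}$ changes; if $C_{prob} = C_1$ there is nothing to prove, so suppose $C_{prob} = C_j$ with $j \ge 2$. Since $C_1$ is never pruned, Assumption~\ref{ass:CI} gives $u_1 \ge \calA_1 \ge l^* = u^*$, so for $C_j$ to have the highest upper bound among the remaining configurations we must in fact have $u_j \ge u_1 \ge u^* > u_j^*$ at the start of the iteration, i.e., $u_j$ is \emph{strictly} above $u_j^*$. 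Because $C_j$'s upper bounds form a strictly decreasing sequence and $u_j^*$ is, by definition, its largest term below $u^*$, every term strictly above $u_j^*$ is in fact at least $u^*$, and the term immediately following it is still at least $u_j^*$; hence after the single probe on $C_j$ the invariant $u_j \ge u_j^*$ is preserved. The key point to emphasize is that, unlike in the proof of Theorem~\ref{thn:grad}, \ucb never probes the ``second highest'' configuration, so no analogue of its Case~2 arises and neither the Convex Condition nor the Proxy Condition is invoked --- Assumption~\ref{ass:CI} alone suffices.

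Finally I would close exactly as in Theorem~\ref{thn:grad}: at termination $u_i \ge u_i^*$ for $2 \le i \le n$, so each such $C_i$ is probed at most one extra time relative to the oracle-optimal scheme; with the geometric step satisfying $c^{\alpha} = 2$ the accumulated probing time of any configuration is at most twice its last probe, and the one extra probe contributes at most another factor of two, giving $t_i \le 4\,t_i^*$ for $2 \le i \le n$. For $C_1$, in the worst case it is probed all the way up to full training data, so $t_1 \le 2\,T_1(|\dd_{tr}|)$. Combining these with $\calT_{min} = \sum_{i=1}^n t_i^* + T_1(|\dd_{tr}|)$ and $\calT = \sum_{i=1}^n t_i + T_1(|\dd_{tr}|)$ yields $\calT \le 4\,\calT_{min}$. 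The main obstacle is the inductive step: showing a probe on $C_j$ cannot push $u_j$ below $u_j^*$. This rests on the observation that \ucb probes $C_j$ ($j \ge 2$) only when $u_j$ is strictly above $u_j^*$ --- forced by $C_1$'s upper bound being at least $u^*$ --- together with $u_j^*$ being the largest term below $u^*$ in $C_j$'s discrete upper-bound sequence, so a single probe advances $u_j$ by at most one term and keeps it $\ge u_j^*$. The remaining runtime accounting is a verbatim adaptation of that in Theorem~\ref{thn:grad}.
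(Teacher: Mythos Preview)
Your proposal is correct and follows essentially the same approach as the paper's own proof. The paper phrases the inductive step as a proof by contradiction (assuming $u_j$ drops below $u_j^*$ after the probe forces $u_j = u_j^*$ beforehand, contradicting $u_j \ge u_1 \ge u^*$), whereas you argue it directly, but the key observation is identical: whenever \ucb probes $C_j$ with $j\ge 2$, the presence of $C_1$ forces $u_j \ge u_1 \ge \calA_1 \ge u^* > u_j^*$, so one probe can bring $u_j$ down no further than $u_j^*$; the closing runtime accounting is verbatim that of Theorem~\ref{thn:grad}.
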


\begin{proof}
\eat{We focus on the case where all the confidence intervals correctly bound the test accuracies, which occurs with probability at least $1-\delta$. In this case, for each configuration, its lowest upper bound and highest lower bound in all the rounds still bound the real test accuracy correctly. So without loss of generality, we can assume that with the geometric increase of the sample size, the confidence interval of each configuration shrinks as Algorithm~\ref{alg:CI-based} proceeds. That is, for each configuration the upper (lower) bound is monotonically decreasing (increasing) as Algorithm~\ref{alg:CI-based} runs. 

We illustrate our analysis with the help of Figure~\ref{fig:alg_analysis}. Without loss of generality, we consider $C_1$ as the best configuration $C_{i^*}$. Each vertical line corresponds to one configuration, with shrinking confidence intervals as Algorithm~\ref{alg:CI-based} proceeds. 
The red and blue lines with the same length depict a corresponding pair of upper and lower bound after a particular probe in Algorithm~\ref{alg:CI-based}. Let $l^*=u^*$ be the optimal solution in Equation~\eqref{eqn:constraints}, as shown by the solid horizontal black line in Figure~\ref{fig:alg_analysis}. Furthermore, as shown in Figure~\ref{fig:alg_analysis}, let $l_1^*$ be the smallest lower bound of $C_1$ that is no smaller than $l^*$, and $u_i^*$ be the largest upper bound of $C_i$ that is no larger than $u^*$, where $2\leq i\leq n$. In the following, we prove that with \ucb, for each configuration $C_i$, $2\leq i \leq n$, the upper confidence bound cannot cross below the red solid line $u_i^*$ in Figure~\ref{fig:alg_analysis}, i.e., Algorithm~\ref{alg:CI-based} terminates with $u_i \geq u_i^*$.
}
Similar to the proof in Theorem~\ref{thn:grad}, we use Figure~\ref{fig:alg_analysis} to help illustrate the analysis. In the following, we prove that with \ucb, for each configuration $C_i$, $2\leq i \leq n$, the upper confidence bound cannot cross below the red solid line $u_i^*$ in Figure~\ref{fig:alg_analysis}, i.e., Algorithm~\ref{alg:CI-based} terminates with $u_i \geq u_i^*$.

For any $2\leq i \leq n$, we will prove {\em by induction} that $u_i \geq u_i^*$ at every iteration of Algorithm~\ref{alg:CI-based}. First, it is obvious that in the first iteration (or probe), $u_i \geq u_i^*$ for any $2\leq i \leq n$. Next, suppose $u_i \geq u_i^*$ after the $(k-1)^{th}$ iteration of Algorithm~\ref{alg:CI-based}, we will show that $u_i \geq u_i^*$ after the $k^{th}$ iteration for any $2\leq i \leq n$ and $k \geq 2$. Since one probe is performed in each iteration, we only need to prove that for the probing configuration $C_{prob}$, $u_{prob} \geq u_{prob}^*$ still holds when $C_{prob} \neq C_1$. For notation simplicity, let $C_j$ be the probing configuration $C_{prob}$ in the $k^{th}$ iteration.

We prove it {\em by contradiction}. If $u_{j} < u^*$ after the $k^{th}$ iteration. then $u_j$ must equal $u_j^*$ at the beginning of the $k^{th}$ iteration since we assume $u_j \geq u_j^*$ holds after the $(k-1)^{th}$ iteration. That is, $u_j = u_j^* < u^*$. Based on the confidence interval property, we know that $u^* = l^* \leq \calA_1 \leq u_1$. Hence, we have $u_{j} < u_1$, which contradicts with the fact that $C_{j}$ is with the highest upper bound and is selected as the probe configuration. 

Thus, we proved that each configuration probes at most one more time than the optimal scheme, i.e., $u_i \geq u_i^*$ where $2\leq i \leq n$. The remainder of the proof is the same as in Theorem~\ref{thn:grad}.
\eat{Furthermore, for a specific configuration, each probe takes twice of the time in its previous probe. Thus, We have the following inequality, where $t_i^*$ is the one-probe runtime for $C_i$ in the optimal scheme.
$$\tilde{t}_i \leq 4 {t_i^*}, \forall 2\leq i \leq n$$

In the worst case, $C_1$ is probed all the way till with full training data. Recall that $c^\alpha = 2$, thus,
$$\tilde{t}_1 \leq 2 T_1(\dd_{tr})$$

Since $\calT_{min} = \sum_{i=1}^n{t_i^*} + T_1(\dd_{tr})$, and $\calT = \sum_{i=1}^n{\tilde{t}_i} + T_1(\dd_{tr})$, we have:
$$\Rightarrow \calT \leq 4 \calT_{min}$$
}
\end{proof}
}

\section{Extra Experiments}\label{sec:extraExp}
\subsection{Varying {\Large $\epsilon$} in CI-based Framework}\label{ssec:exp_epsilon}
\begin{figure}[h!]
\centering
    \vspace{-2mm}
    \begin{subfigure}{0.49\textwidth}
        \centering
        \includegraphics[width=\linewidth]{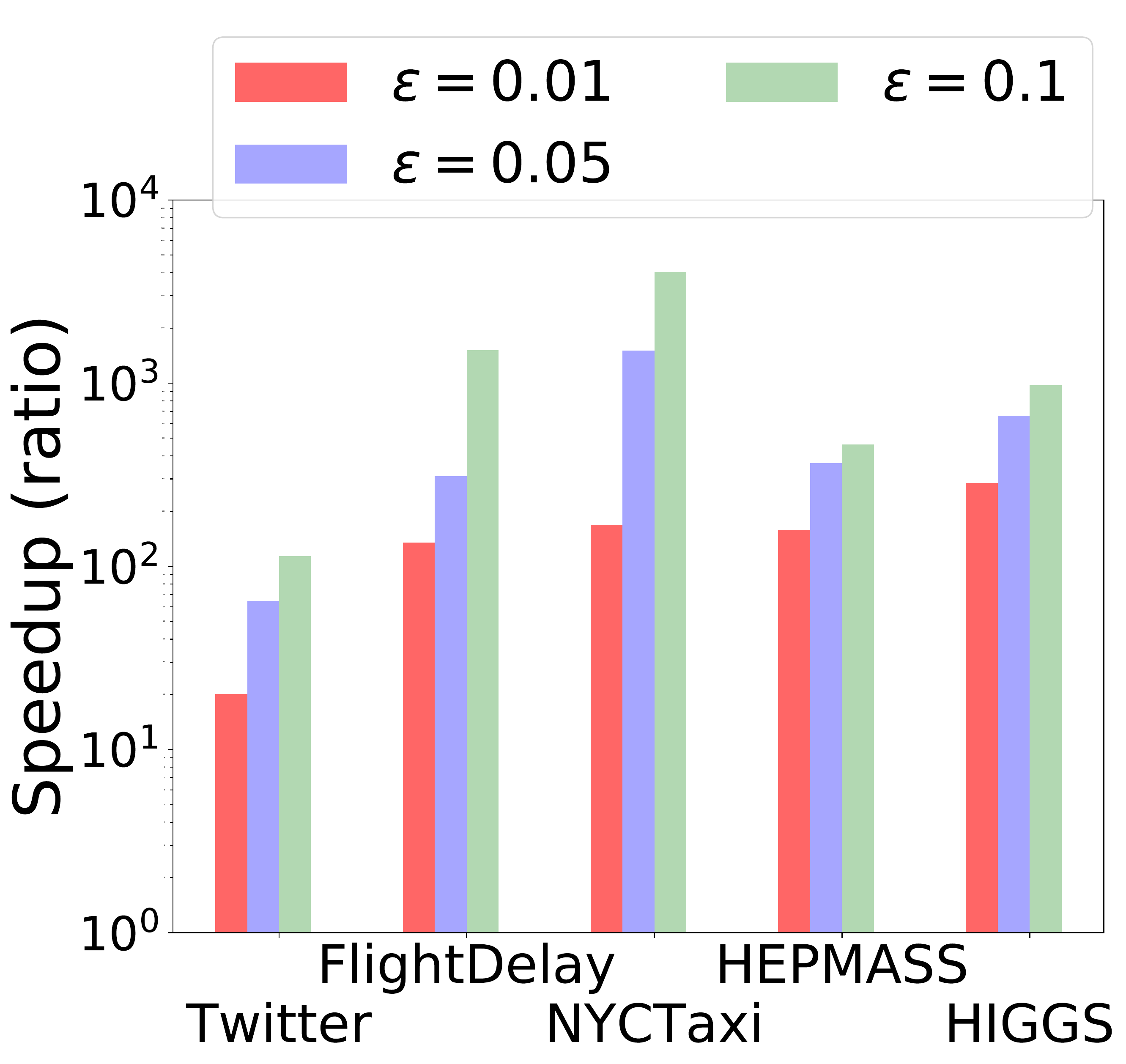}
        \vspace{-5mm}
        \caption{Speedup} 
        \label{fig:epsilon_speedup}
    \end{subfigure}
    \vspace{5mm}
    \begin{subfigure}{.49\textwidth}
        \centering
        \includegraphics[width=\linewidth]{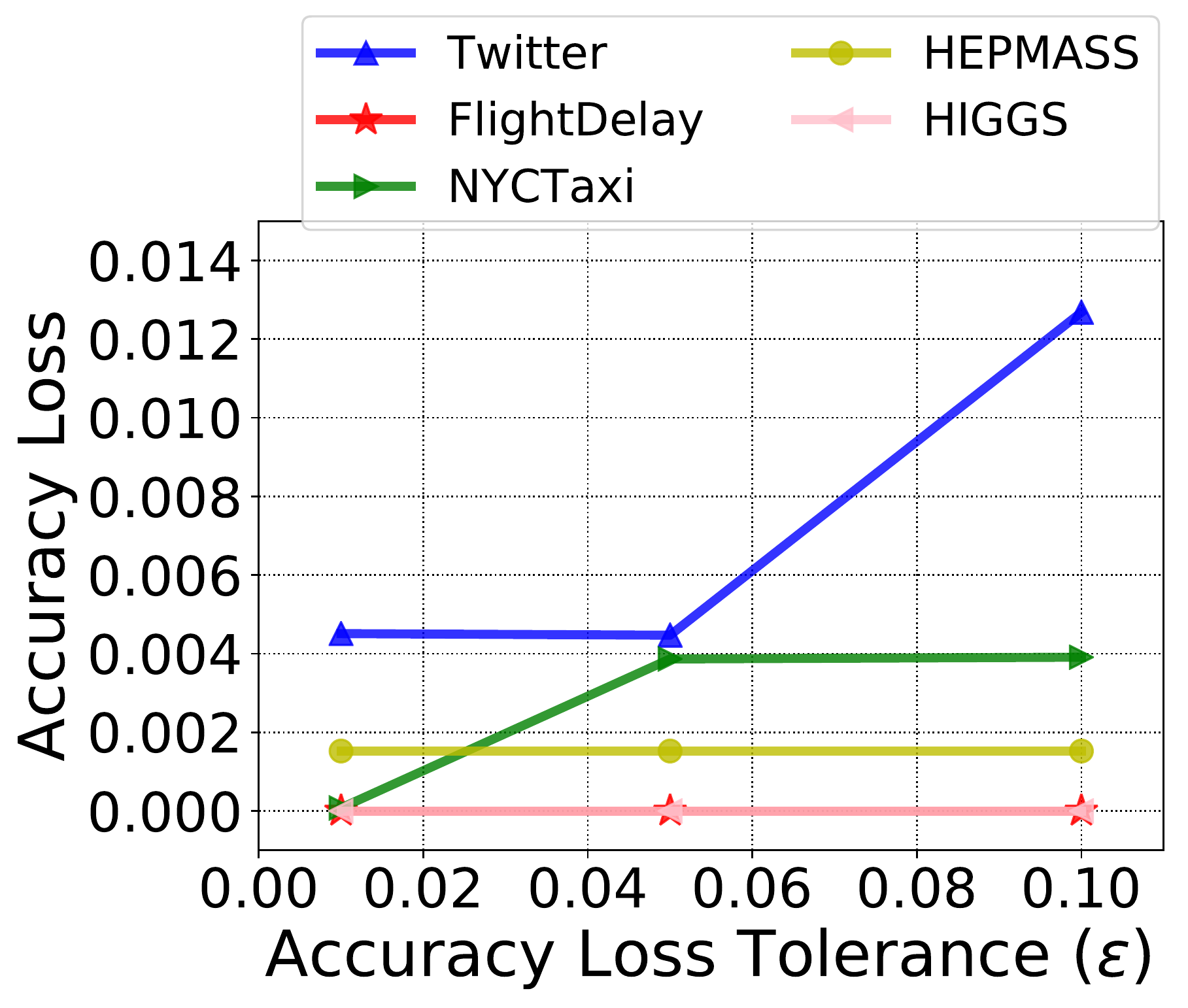}
        \vspace{-3mm}
        \caption{Accuracy Loss} 
        \label{fig:epsilon_accuracy}
    \end{subfigure}
    \vspace{-5mm}
    \caption{Speedup vs. Accuracy Loss with Varying $\epsilon$}
    \label{fig:varyingEps}
\end{figure}
In this experiment, we compare the real accuracy loss and the speedup with varying input accuracy loss tolerance $\epsilon$. As shown in Figure~\ref{fig:varyingEps}, with the increase of the input $\epsilon$, both the speedup and the real accuracy loss increase. First, as $\epsilon$ increases, the pruning condition is easier to be satisfied, leading to faster termination of Algorithm~\ref{alg:CI-based}. Second, with smaller running time (i.e., resource) on each configuration, the \plateau tends to be less accurate. Consequently, the real accuracy loss typically increases as the increase of $\epsilon$. As depicted in Figure~\ref{fig:epsilon_accuracy}, the real accuracy loss increases slightly as $\epsilon$ increases. Specifically, when $\epsilon=0.1$, the real accuracy loss is 0.012 for Twitter, and below 0.004 for other datasets, while the speedup reaches 3 orders of magnitude for FlightDelay, NYCTaxi, and HIGGS.

\eat{
\subsection{CI-based pruning vs. Successive-halving}\label{ssec:exp_sh}
Next, we compare our proposed CI-based pruning with Successive-halving. Successive-halving was proposed as a pruning strategy to evaluate iterative training configurations with a resource budget of the total number of iterations of all configurations. We modify it to use the total sample size as the resource budget. In each round, it trains a classifier with the sampled data for each remaining configuration, and then eliminates the half of the low-performing configurations. It repeats until there is only one remaining configuration. 

Since the two solutions are designed to satisfy different constraints (accuracy loss and resource), they are not directly comparable. We do our best to evaluate them in two scenarios: {\em (a)} with no resource constraint; and {\em(b)} with resource constraint. We introduce a metric, called {\em relative accuracy loss}, to measure the difference between the returned configuration $C_{i'}$ and the best configuration $C_{i^*}$ in terms of the test accuracy:
    $\Delta_{rel} = \frac{|\calA_{i^*} - \calA_{i'}|}{\calA_{i^*}}$.
The smaller $\Delta_{rel}$ is, the better.



\begin{figure}[h]
\centering
    \begin{subfigure}{0.49\textwidth}
        \centering
        \includegraphics[width=\linewidth]{SH_scatter}
        \vspace{-5mm}
        \caption{Speedup vs. $\Delta_{rel}$} 
        \label{fig:SH_scatter}
    \end{subfigure}
    \begin{subfigure}{.49\textwidth}
        \centering
        \includegraphics[width=\linewidth]{SH_percentage}
        \vspace{-5mm}
        \caption{Boxplot of $\Delta_{rel}$}
        \label{fig:SH_percentage}
    \end{subfigure}
    \vspace{-2mm}
    \caption{CI-based pruning vs. Successive-halving in Scenario (a)}
    \label{fig:SH_delta}
\end{figure}

\stitle{Scenario (a): no resource constraint.} 
We run Successive-halving with identical sample size sequence as \aml, to compare the CI-based pruning and the point-based halving strategy. We perform the same set of experiments as in the main experiment section for Successive-halving.
We depict the comparison between Successive-halving and our \aml in Figure~\ref{fig:SH_delta}. The x-axis in Figure~\ref{fig:SH_scatter} refers to the relative accuracy loss compared to the best configuration by Full-run, y-axis is the speedup compared to the running time of Full-run, and each point corresponds to a specific experiment with a certain dataset and $\cc$. We can see that Successive-halving has a similar speedup as \aml over Full-run. However, the relative accuracy loss can be an order of magnitude larger than that of \aml, e.g., $8\%$ vs. $0.8\%$. This is because the pruning performed in Successive-halving is based on the ranking of the current test accuracy. 
On the contrary, \aml uses confidence interval of the real test accuracy to perform safer pruning. 
Figure~\ref{fig:SH_percentage} presents a boxplot summarizing the relative accuracy loss for our solution and Successive-halving respectively. On average, the relative accuracy loss for our CI-based solution is $0.24\%$ (all below 1\%), and 2\% for Successive-halving (up to 8\%), which is nearly ten times larger.  
}

\subsection{Varying Time Constraint in CI-based Pruning vs. Successive-halving}\label{ssec:sh_b}
\begin{figure}[h]
    \centering
        \vspace{-2mm}
        \begin{subfigure}{0.49\textwidth}
            \centering
            \includegraphics[width=\linewidth]{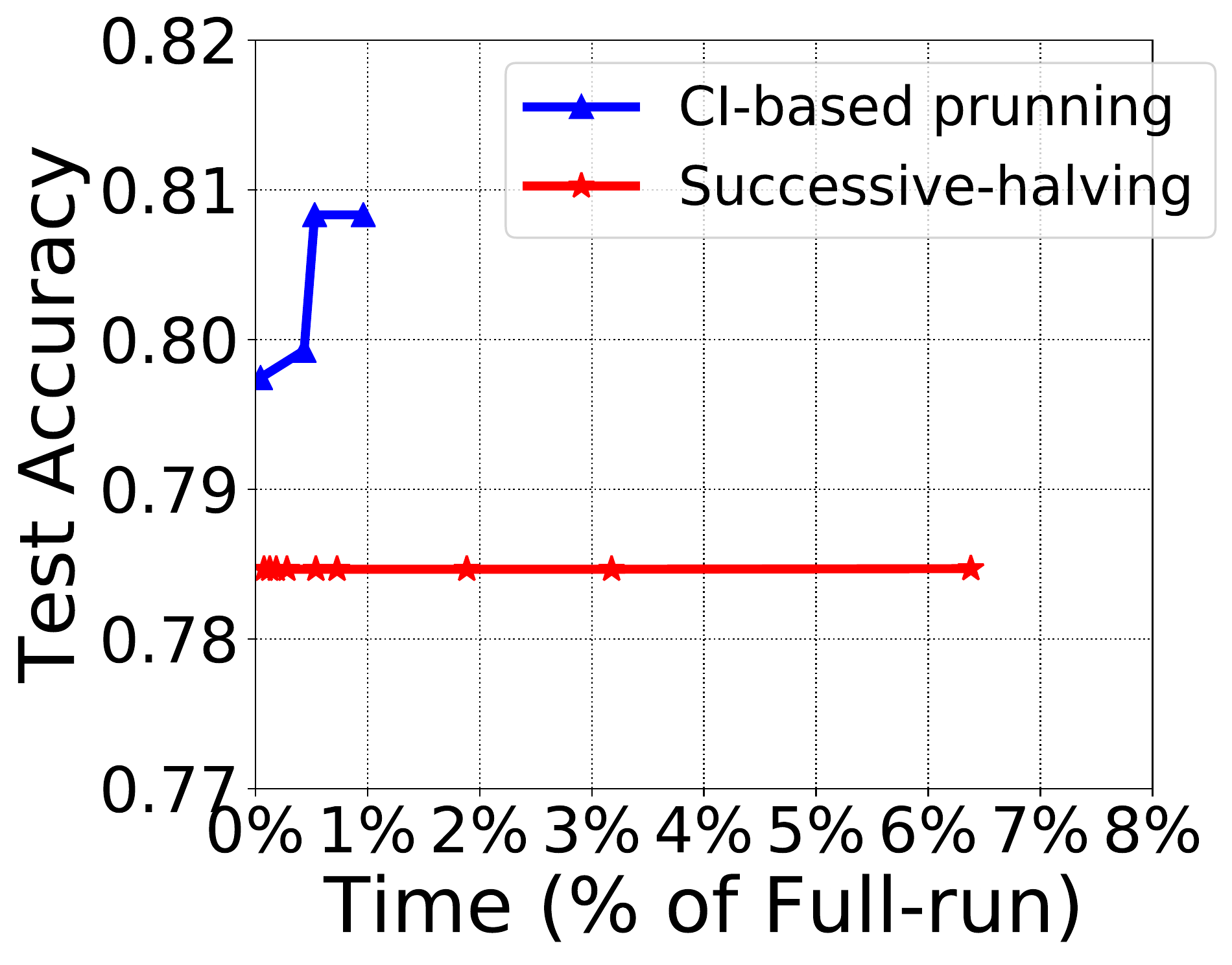}
            \vspace{-5mm}
            \caption{FlightDelay}
            \label{fig:SH_flight}
        \end{subfigure}
        \begin{subfigure}{.49\textwidth}
            \centering
            \includegraphics[width=\linewidth]{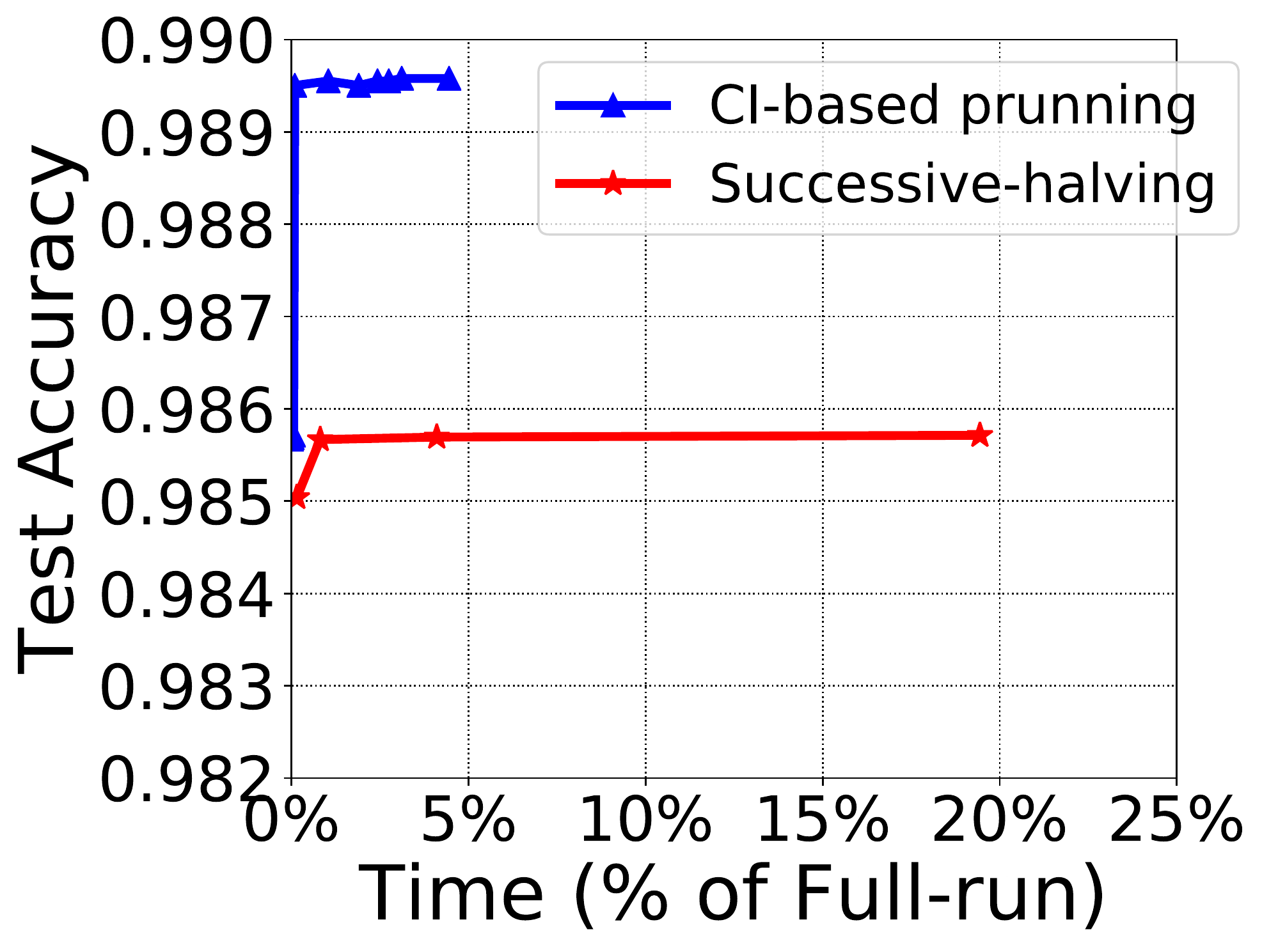}
            \vspace{-5mm}
            \caption{NYCTaxi}
            \label{fig:SH_NYCTaxi}
        \end{subfigure}
        \begin{subfigure}{0.49\textwidth}
            \centering
            \includegraphics[width=\linewidth]{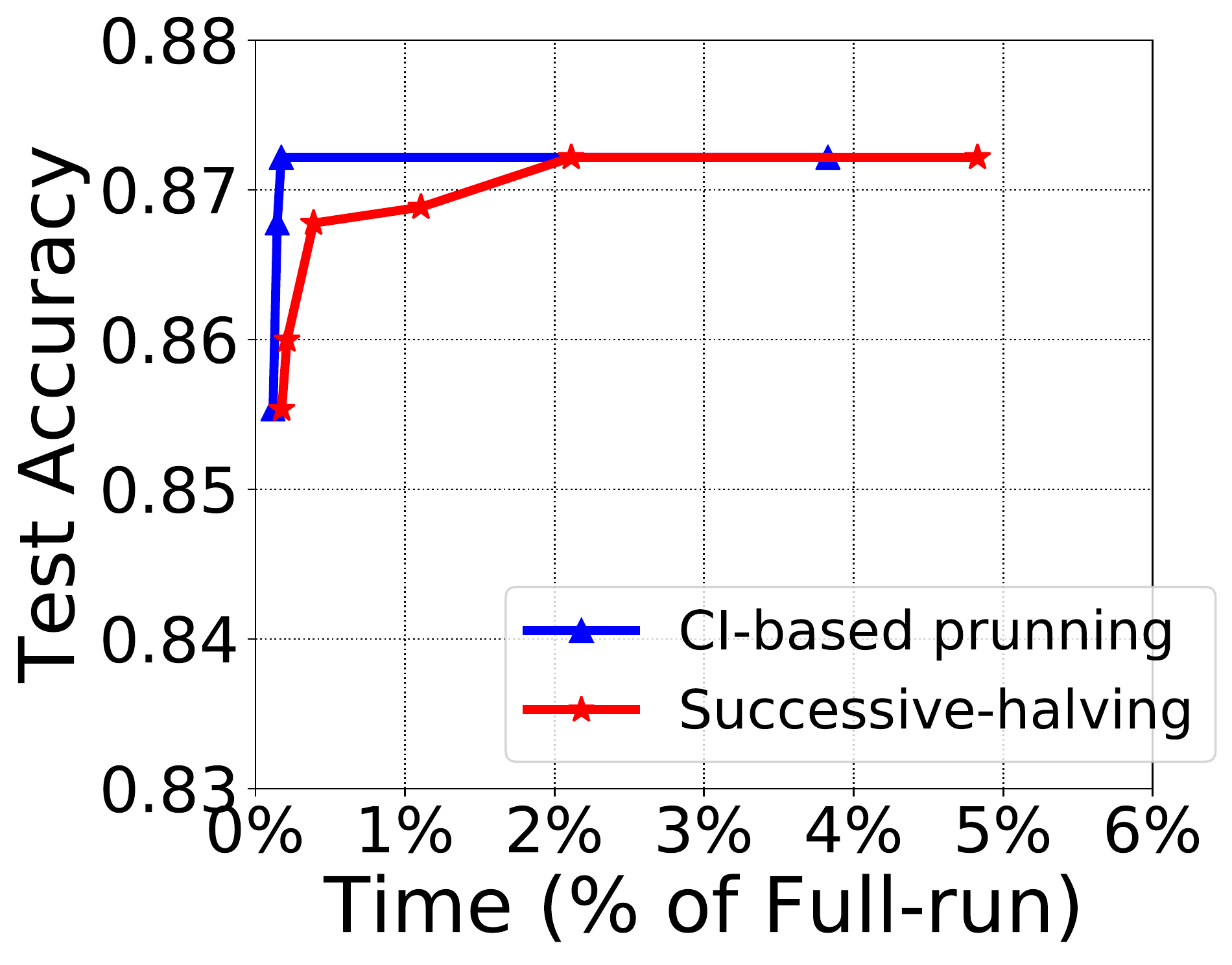}
            \vspace{-5mm}
            \caption{HEPMASS}
            \label{fig:SH_Hepmass}
        \end{subfigure}
        \begin{subfigure}{.49\textwidth}
            \centering
            \includegraphics[width=\linewidth]{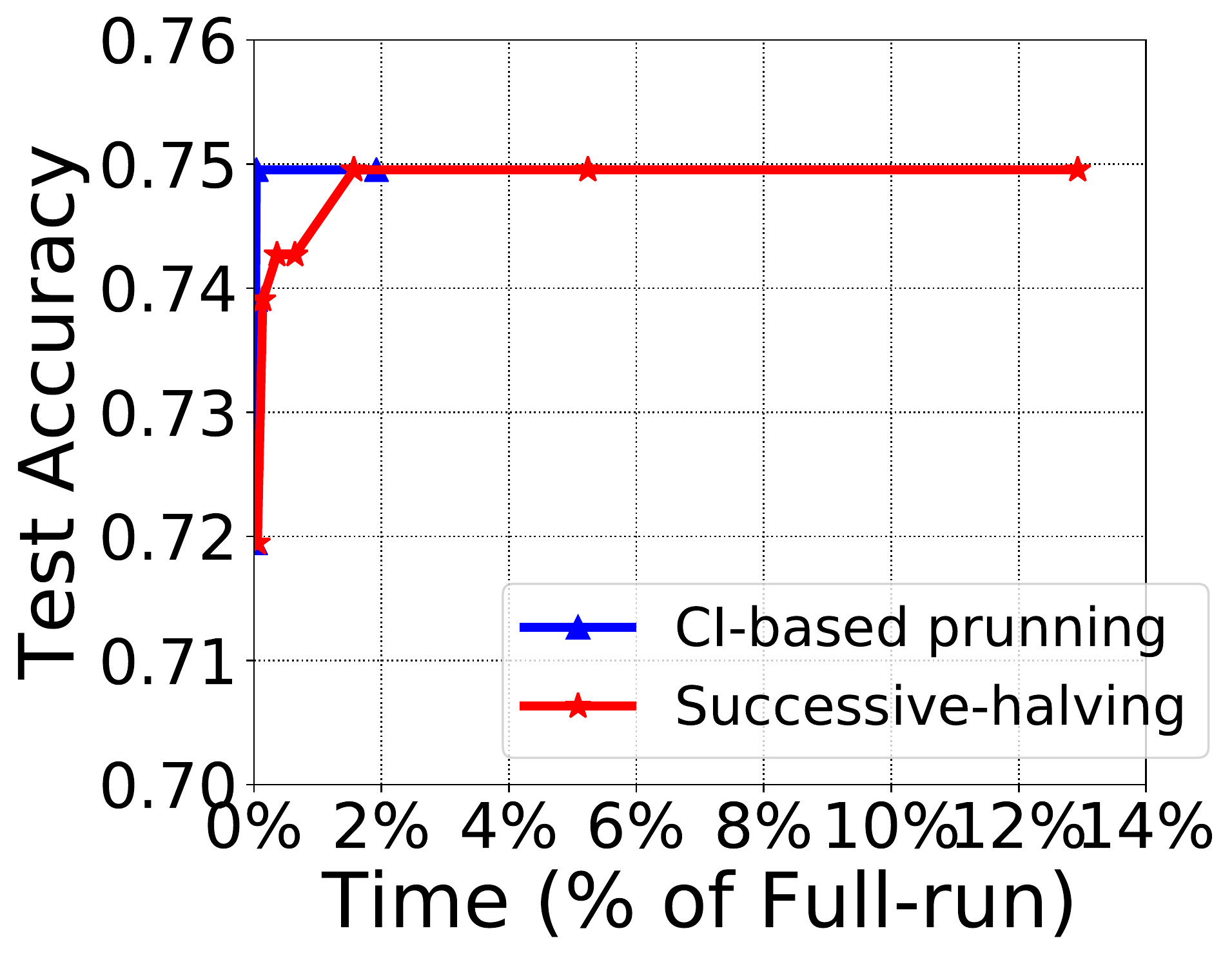}
            \vspace{-5mm}
            \caption{HIGGS}
            \label{fig:SH_Higgs}
        \end{subfigure}
        \vspace{-3mm}
        \caption{CI-based Pruning vs. Successive-halving with Varying Time Constraint\eat{in Scenario (b)}}
        \label{fig:SH_time}
    \end{figure}
    
In addition to the scenario\eat{scenario (a)} of no resource constraint in the main paper, we also study the performance of our CI-based framework and Successive-halving, when we impose resource constraint on these two algorithms. In general, our CI-based framework dominates Successive-halving with varying resources, i.e., with the same resource, our framework returns the configuration with higher testing accuracy than that provided by Successive-halving.

For Successive-halving, the resource budget is controlled by the initial sample size. We vary the initial training sample size in Successive-halving starting from 250 and increase the initial training sample size by 2 every time. The initial test sample size is always twice of the initial training sample size. Each point in the red line of Figure~\ref{fig:SH_time} corresponds to one initial sample size, and the left most point has 250 initial training samples. As a result, we can attain various running time of Successive-halving, and we further normalize them as the percentage of Full-run's running time. 
For our CI-based framework, we add the option for it to terminate at any iteration. In Algorithm~\ref{alg:CI-based}, we output a best-guess configuration at the end of each iteration. Specifically, we compare the configuration $C_{i'}$ (with the highest lower bound so far) against the configuration $\Omega_1$ (with the highest upper bound so far), and output the one with smaller gap between its lower bound and the highest upper bound of all the other configurations. The intuition is that, in order to prune all the other configurations, we need to compare the lower bound of the output configuration with the upper bound of all the other configurations, and we would like this gap to be as small as possible such that the bound of the accuracy loss between the output configuration and the best configuration is small. Each point in the blue line of Figure~\ref{fig:SH_time} corresponds to one such best-guess configuration.

In Figure~\ref{fig:SH_time}, x-axis is the running time percentage taken compared to Full-run, and y-axis is the real test accuracy for the returned configuration. From Figure~\ref{fig:SH_time}, we can see that our CI-based framework dominates Successive-halving. In particular, the test accuracy provided by Successive-halving is much worse than that returned by our framework in Figure~\ref{fig:SH_time}(a)(b); while in Figure~\ref{fig:SH_time}(c)(d) Successive-halving takes much longer time to reach the same test accuracy as that in our framework. In general, when starting with larger initial training sample size, Successive-halving can return better configuration. This is because the point estimation in Successive-halving is more accurate when using larger training sample size. However, this in turn increases the total running time of Successive-halving, making it inferior to our CI-based framework. The result suggests that the \aml is also useful in the resource-constrained scenario, though it was not designed for that scenario.  

{

\subsection{Comparison of Scheduling Schemes}\label{ssec:exp_scheduling}
\begin{figure}[t!]
        \centering
        \includegraphics[width=0.7\linewidth]{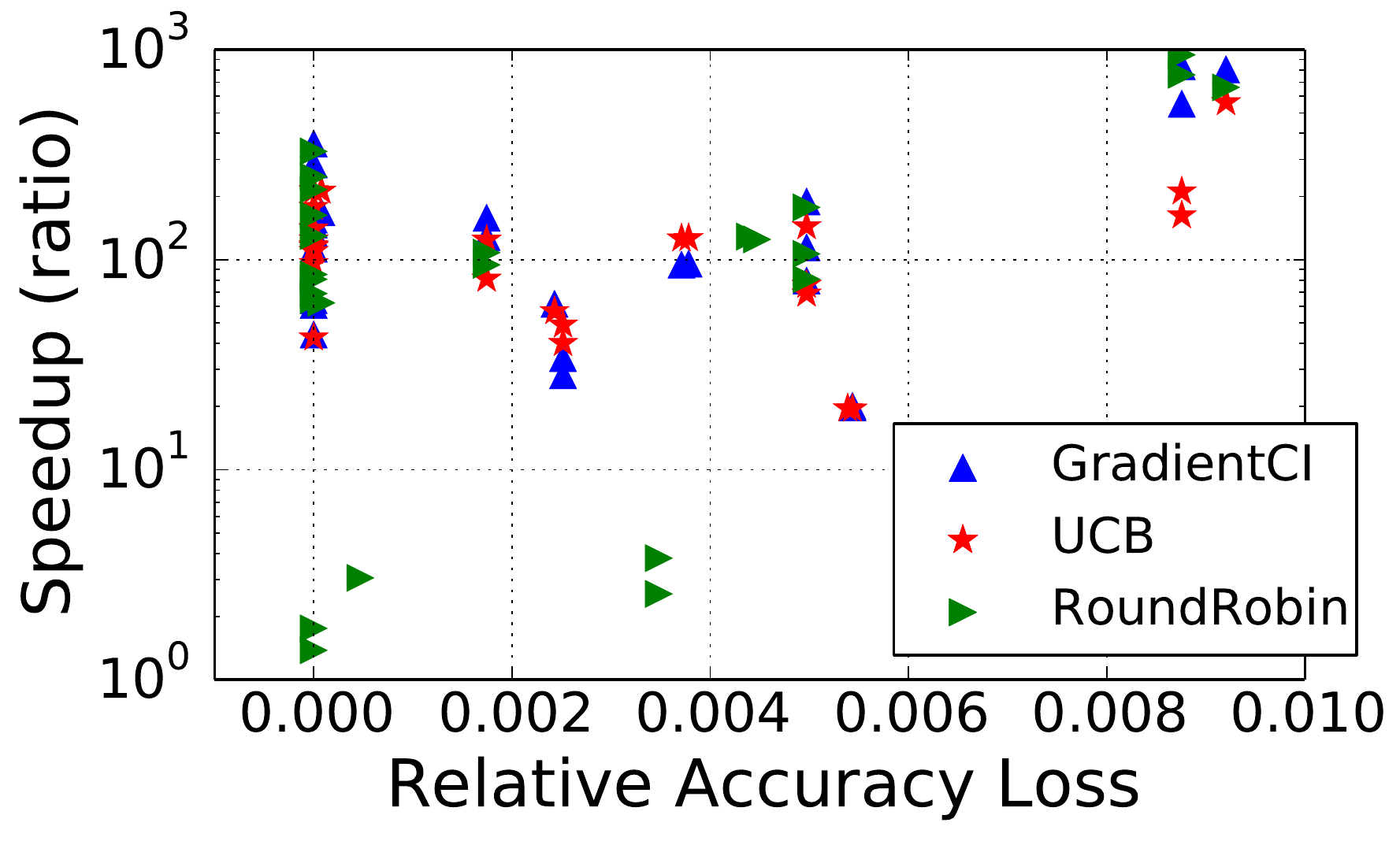}
    \caption{Comparison of Different Scheduling Schemes}
    \label{fig:scheduling}
\end{figure}
    


Within the CI-based framework \aml, we now empirically study the impact of three different scheduling schemes, i.e., \grad, {\em \ucb}, and {\sc RoundRobin}. 

\eat{\ucb is a widely used scheduling scheme in multi-arm bandit problem. As indicated by the name, in each iteration \ucb always picks the configuration with the highest upper confidence bound for probing. The intuition is that the upper bound reflects the potential of this particular configuration, and thus the configuration with higher upper bound deserves more exploration. 
To some extent, UCB pushes the upper bound of each configuration to end up with the same value, which kind of matches the second condition in Equation~\eqref{eqn:constraints}. However, UCB does not take lower bound's growth rate and upper bound's decrease rate into consideration, i.e., the first condition in Equation~\eqref{eqn:constraints}. 
}

{\new Recall that \ucb always picks the configuration with the highest upper confidence bound for probing.}
{\sc RoundRobin} allocates resources (i.e., probes) evenly among the remaining configurations. Specifically, {\sc RoundRobin} chooses the configuration with the smallest number of probes as the $C_{prob}$ in each iteration, replacing line 10 in Algorithm~\ref{alg:CI-based}.

We perform the same set of experiments as that in the main experiment section, but with different scheduling schemes in our CI-based framework.
First, we observe that in most cases, when applying different scheduling schemes, the accuracy loss of the returned configuration remains almost the same. However, the speedup differs from \grad to \ucb and {\sc RoundRobin}. Figure~\ref{fig:scheduling} depicts the speedup and relative accuracy loss achieved by different scheduling schemes. Each point refers to a particular dataset and a configuration set size $|\cc|$. With the same relative accuracy loss, \grad can achieve the highest speedup in most cases. We notice that {\sc RoundRobin} performs much more slowly in a few cases (speedup below 5 while the other two schedulers achieve over 20$\times$ speedup). This implies the non-adaptive scheduling can waste resources. \grad and \ucb are more robust. The average speedup of \grad and \ucb are 190$\times$ and 128$\times$ respectively. That shows the benefit of taking the speed of CI change into consideration during scheduling.

Note that the \ucb method evaluated in this section is an enhanced algorithm of DAUB~\cite{AAAI:Sabharwal16}. Both \ucb and DAUB use the same scheduling scheme, but \ucb uses our novel CIEstimator and CI-based pruning technique to ensure the $\epsilon$-guarantee. 
}



\end{document}